\newtheorem{theo}{Theorem}
\newtheorem{lemma}{Lemma}
\newtheorem{corollary}{Corollary}
\DeclareMathOperator*{\argmin}{arg\,min}
\DeclareMathOperator*{\argmax}{arg\,max}
\icmltitlerunning{Pointwise Binary Classification with Pairwise Confidence Comparisons}
\begin{document}

\twocolumn[
\icmltitle{Pointwise Binary Classification with Pairwise Confidence Comparisons}




\begin{icmlauthorlist}
\icmlauthor{Lei Feng}{first}
\icmlauthor{Senlin Shu}{second}
\icmlauthor{Nan Lu}{third,fourth}
\icmlauthor{Bo Han}{fifth,fourth}
\icmlauthor{Miao Xu}{sixth,fourth}
\icmlauthor{Gang Niu}{fourth}
\icmlauthor{Bo An}{seventh}
\icmlauthor{Masashi Sugiyama}{fourth,third}
\end{icmlauthorlist}

\icmlaffiliation{first}{College of Computer Science, Chongqing University, China}
\icmlaffiliation{second}{College of Computer and Information Science, Southwest University, China}
\icmlaffiliation{third}{The University of Tokyo, Japan}
\icmlaffiliation{fourth}{RIKEN Center for Advanced Intelligence Project, Japan}
\icmlaffiliation{fifth}{Department of Computer Science, Hong Kong Baptist University, China}
\icmlaffiliation{sixth}{School of Information Technology and Electrical Engineering,
The University of Queensland, Australia}
\icmlaffiliation{seventh}{School of Computer Science and Engineering, Nanyang Technological University, Singapore}

\icmlcorrespondingauthor{Lei Feng}{lfeng@cqu.edu.cn}

\icmlkeywords{Machine Learning, ICML}

\vskip 0.3in
]



\printAffiliationsAndNotice{}  

\begin{abstract}
To alleviate the data requirement for training effective binary classifiers in binary classification, many \emph{weakly supervised learning} settings have been proposed.
Among them, some consider using \emph{pairwise} but not \emph{pointwise} labels, when pointwise labels are not accessible due to privacy, confidentiality, or security reasons.
However, as a pairwise label denotes whether or not two data points share a pointwise label, it cannot be easily collected if either point is equally likely to be positive or negative.
Thus, in this paper, we propose a novel setting called \emph{pairwise comparison (Pcomp) classification}, where we have only pairs of unlabeled data that we know one is more likely to be positive than the other.
Firstly, we give a Pcomp \emph{data generation process}, derive an \emph{unbiased risk estimator}~(URE) with theoretical guarantee, and further improve URE using \emph{correction functions}.
Secondly, we link Pcomp classification to \emph{noisy-label learning} to develop a progressive URE and improve it by imposing \emph{consistency regularization}.
Finally, we demonstrate by experiments the effectiveness of our methods, which suggests Pcomp is a valuable and practically useful type of pairwise supervision besides the pairwise label.
\end{abstract}

\section{Introduction}
Traditional supervised learning techniques have achieved great advances, while they require precisely labeled data. In many real-world scenarios, it may be too difficult to collect such data. To alleviate this issue, a large number of weakly supervised learning problems \cite{zhou2018brief} have been extensively studied, including semi-supervised learning \cite{zhu2009introduction,niu2013squared,sakai2018semi}, multi-instance learning \cite{zhou2009multi,sun2016multiple,zhang2017multi}, noisy-label learning \cite{han2018co,xia2019anchor,wei2020combating}, partial-label learning \cite{zhang2017disambiguation,feng2020provably,lv2020progressive}, complementary-label learning \cite{ishida2017learning,yu2018learning,Ishida2019Complementary,feng2020learning}, positive-unlabeled classification \cite{elkan2008learning,niu2016theoretical,gong2019loss,chen2020self}, positive-confidence classification \cite{Ishida2018Binary}, similar-unlabeled classification \cite{Bao2018Classification},
similar-dissimilar classification \cite{shimada2020classification,bao2020similarity}, unlabeled-unlabeled classification \cite{lu2019on,lu2020mitigating}, and triplet classification \cite{cui2020classification}.

Among these weakly supervised learning problems, some of them \cite{Bao2018Classification,shimada2020classification,bao2020similarity} consider learning a binary classifier with \emph{pairwise} labels that indicate whether two instances belong to (similar) the same class or not (dissimilar), when \emph{pointwise} labels are not accessible due to privacy, confidentiality, or security reasons. However, if either of the two instances is equally likely to be positive or negative, it becomes difficult for us to accurately collect the underlying pairwise label of them.
This motivates us to consider using another type of pairwise supervision (instead of the pairwise label) for successfully learning a binary classifier.

In this paper, we propose a novel setting called \emph{pairwise comparison (Pcomp) classification}, where we aim to perform pointwise binary classification with only \emph{pairwise comparison data}.
A pairwise comparison $(\boldsymbol{x},\boldsymbol{x}^\prime)$ represents that the instance $\boldsymbol{x}$ has a larger confidence of belonging to the positive class than the instance $\boldsymbol{x}^\prime$. Such weak supervision (pairwise confidence comparison) could be much easier for people to collect than full supervision (pointwise label) in practice, especially for applications on sensitive or private matters. 
For example, it may be difficult to collect sensitive or private data with pointwise labels, as asking for the true labels could be prohibited or illegal.
In this case, it could be easier for people to collect other weak supervision like the comparison information between two examples. 

It is also advantageous to consider pairwise confidence comparisons in pointwise binary classification with class overlapping, where the labeling task is difficult, and even experienced labelers may provide wrong pointwise labels. Let us denote the labeling standard of a labeler as $\tilde{p}(y|\boldsymbol{x})$ and assume that an instance $\boldsymbol{x}_1$ is more positive than another instance $\boldsymbol{x}_2$. Facing the difficult labeling task, different labelers may hold different labeling standards, $\tilde{p}(y=+1|\boldsymbol{x}_1)>\tilde{p}(y=+1|\boldsymbol{x}_2)>1/2$, $\tilde{p}(y=+1|\boldsymbol{x}_1)>1/2>\tilde{p}(y=+1|\boldsymbol{x}_2)$, and $1/2>\tilde{p}(y=+1|\boldsymbol{x}_1)>\tilde{p}(y=+1|\boldsymbol{x}_2)$, thereby providing different pointwise labels: $(+1,+1)$, $(+1,-1)$, $(-1,-1)$. We can find that different labelers may provide inconsistent pointwise labels, while pairwise confidence comparisons are unanimous and accurate. 
One may argue that we could aggregate multiple labels of the same instance using crowdsourcing learning methods \cite{whitehill2009whose,raykar2010learning}. However, as not every instance will be labeled by multiple labelers, it is not always applicable to crowdsourcing learning methods. Therefore, our proposed Pcomp classification is useful in this case. 

Our main contributions can be summarized as follows:
\begin{itemize}[leftmargin=0.5cm]
\vspace{-10pt}
\item We propose \emph{pairwise comparison (Pcomp) classification}, a novel weakly supervised learning setting, and present a mathematical formulation for the generation process of pairwise comparison data.
\vspace{-5pt}
\item We prove that an \emph{unbiased risk estimator} (URE) can be derived, propose an \emph{empirical risk minimization} (ERM) based method, and present an improvement using correction functions \cite{lu2020mitigating} for alleviating overftting when complex models are used.
\vspace{-5pt}
\item We start from the noisy-label learning perspective to introduce the \emph{RankPruning} method \cite{northcutt2017learning} that holds a \emph{progressive} URE for solving our proposed Pcomp classification problem and improve it by imposing \emph{consistency regularization}. 
\vspace{-10pt}
\end{itemize}
Extensive experimental results demonstrate the effectiveness of our proposed solutions for Pcomp classification.

\section{Preliminaries}
Binary classification with pairwise comparisons and extra pointwise labels has been studied \cite{xu2017noise,kane2017active}, while our work focuses on a more challenging problem where \emph{only pairwise comparison examples are provided}. To the best of our knowledge, we are the first to investigate such a challenging problem.
Unlike previous studies \cite{xu2017noise,kane2017active} that leverage some pointwise labels to differentiate the labels of pairwise comparisons, our methods are purely based on ERM with only pairwise comparisons. In the next, we briefly introduce some notations and review related problem formulations.

\noindent\textbf{Binary Classification.}\quad Since our paper focuses on how to train a binary classifier from pairwise comparison data, we first review the problem formulation of binary classification. Let the feature space be $\mathcal{X}$ and the label space be $\mathcal{Y}=\{+1,-1\}$. Suppose the collected dataset is denoted by $\mathcal{D}=\{(\boldsymbol{x}_i,y_i)\}_{i=1}^n$ where each example $(\boldsymbol{x}_i,y_i)$ is independently sampled from the joint distribution with density $p(\boldsymbol{x},y)$, which includes an instance $\boldsymbol{x}_i\in\mathcal{X}$ and a label $y_i\in\mathcal{Y}$. The goal of binary classification is to train an optimal classifier $f:\mathcal{X}\mapsto\mathbb{R}$ by minimizing the following (expected) classification risk:
\begin{align}
\nonumber
R(f) &= \mathbb{E}_{p(\boldsymbol{x},y)}\big[\ell(f(\boldsymbol{x}),y)\big]\\
\nonumber
&=\pi_+\mathbb{E}_{p_+(\boldsymbol{x})}\big[\ell(f(\boldsymbol{x}),+1)\big]\\
\label{ori_risk}
&\quad\quad\quad\quad\quad\quad\quad+\pi_-\mathbb{E}_{p_-(\boldsymbol{x})}\big[\ell(f(\boldsymbol{x}),-1)\big],
\end{align}
where $\ell: \mathbb{R}\times\mathcal{Y}\mapsto\mathbb{R}_+$ denotes a binary loss function, $\pi_+ := p(y=+1)$ (or $\pi_- := p(y=-1)$) denotes the \emph{positive} (or \emph{negative}) \emph{class prior probability}, and $p_+(\boldsymbol{x}):=p(\boldsymbol{x}|y=+1)$ (or $p_-(\boldsymbol{x}):=p(\boldsymbol{x}|y=-1)$) denotes the \emph{class-conditional probability density} of the positive (or negative) data. ERM approximates the expectations over $p_+(x)$ and $p_-(x)$ by the empirical averages of positive and negative data and the empirical risk is minimized with respect to the classifier $f$.

\noindent\textbf{Positive-Unlabeled (PU) Classification.}\quad In some real-world scenarios, it may be difficult to collect negative data, and only positive (P) and unlabeled (U) data are available. PU classification aims to train an effective binary classifier in this weakly supervised setting. Previous studies \cite{Plessis2014Analysis,Plessis2015Convex,Kiryo2017Positive} showed that the classification risk $R(f)$ in Eq. (\ref{ori_risk}) can be rewritten only in terms of positive and unlabeled data as
\begin{align}
\nonumber
&R_{\mathrm{PU}}(f) = \pi_+\mathbb{E}_{p_+(\boldsymbol{x})}\big[\ell(f(\boldsymbol{x}),+1)-\ell(f(\boldsymbol{x}),-1)\big]\\
\label{pu_estimator}
&\quad\quad\quad\quad\quad\quad\quad\quad\quad\quad\quad \ \ +\mathbb{E}_{p(\boldsymbol{x})}\big[\ell(f(\boldsymbol{x}),-1)\big],
\end{align}
where $p(\boldsymbol{x}) = \pi_+p_+(\boldsymbol{x})+\pi_-p_-(\boldsymbol{x})$ denotes the density of unlabeled data. This risk expression immediately allows us to employ ERM in terms of positive and unlabeled data.

\noindent\textbf{Unlabeled-Unlabeled (UU) Classification.}\quad The recent studies \cite{lu2019on,lu2020mitigating} showed that it is possible to train a binary classifier only from two unlabeled datasets with different class priors. \citet{lu2019on} showed that the classification risk $R(f)$ can be rewritten as
\begin{align}
\nonumber
R_{\mathrm{UU}}(f)&= \mathbb{E}_{p_{\mathrm{tr}}(\boldsymbol{x})}\Big[\frac{(1-\theta^\prime)\pi_+}{\theta-\theta^\prime}\ell(f(\boldsymbol{x}),+1) \\
\nonumber
&\quad\quad\quad\quad\quad\quad -\frac{\theta^\prime(1-\pi_+)}{\theta-\theta^\prime}\ell(f(\boldsymbol{x}),-1)\Big] \\
\nonumber
& + \mathbb{E}_{p_{\mathrm{tr}^\prime}(\boldsymbol{x}^\prime)}\Big[\frac{\theta(1-\pi_+)}{\theta-\theta^\prime}\ell(f(\boldsymbol{x}^\prime),-1) \\
\label{uu_estimator}
&\quad\quad\quad\quad\quad\quad
-\frac{(1-\theta)\pi_+}{\theta-\theta^\prime}\ell(f(\boldsymbol{x}^\prime),+1)\Big],
\end{align}
where $\theta$ and $\theta^\prime$ are different class priors of two unlabeled datasets, and $p_{\mathrm{tr}}(\boldsymbol{x})$ and $p_{\mathrm{tr}^\prime}(\boldsymbol{x}^\prime)$ are the densities of two datasets of unlabeled data, respectively. This risk expression immediately allows us to employ ERM only from two sets of unlabeled data. For $R_{\mathrm{UU}}(f)$ in Eq.~(\ref{uu_estimator}), if we set $\theta=1$, $\theta^\prime=\pi_+$, and replace $p_{\mathrm{tr}}(\boldsymbol{x})$ and $p_{\mathrm{tr}^\prime}(\boldsymbol{x}^\prime)$ by $p_+(\boldsymbol{x})$ and $p(\boldsymbol{x})$ respectively, then we can recover $R_{\mathrm{PU}}(f)$ in Eq.~(\ref{pu_estimator}). Therefore, UU classification could be taken as a generalized framework of PU classification in terms of URE. Besides, Eq.~(\ref{uu_estimator}) also recovers a complicated URE of similar-unlabeled classification \cite{Bao2018Classification} by setting $\theta=\pi_+$ and $\theta^\prime=\pi_+^2/(2\pi_+^2-2\pi_++1)$.

To solve our proposed Pcomp classification problem, we will present a mathematical formulation for the generation process of pairwise comparison data, based on which we will explore two UREs that are compatible with any model and optimizer to train a binary classifier by ERM and establish the corresponding \emph{estimation error bounds}.

\section{Data Generation Process}
In order to derive UREs for performing ERM, we first formulate the underlying generation process of pairwise comparison data\footnote{In contrast to \citet{xu2019uncoupled} and \citet{xu2020regression} that utilized pairwise comparison data to solve the regression problem, we focus on binary classification.}, which consists of pairs of unlabeled data that we know which one is more likely to be positive.
Suppose the provided dataset is denoted by $\widetilde{\mathcal{D}} = \{(\boldsymbol{x}_i,\boldsymbol{x}_i^\prime)\}_{i=1}^n$ where $(\boldsymbol{x}_i,\boldsymbol{x}_i^\prime)$ (with their unknown true labels ($y_i$, $y_i^\prime$)) is expected to satisfy $p(y_i=+1|\boldsymbol{x}_i)> p(y_i^\prime=+1|\boldsymbol{x}_i^\prime)$. 

It is clear that we could easily collect pairwise comparison data if the positive confidence (i.e., $p(y=+1|\boldsymbol{x})$) of each instance could be obtained. However, such information is much harder to obtain than class labels in real-world scenarios. Therefore, unlike some studies \cite{Ishida2018Binary,shinoda2020binary} that assume the positive confidence of each instance is provided by the labeler, we only assume that the labeler has access to the labels of training data. Specifically, we adopt the assumption \cite{cui2020classification} that weakly supervised examples are first sampled from the true data distribution, but the labels are only accessible to the labeler. Then, the labeler would provide us weakly supervised information (i.e., pairwise comparison information) according to the labels of sampled data pairs. That is, for any pair of unlabeled data $(\boldsymbol{x},\boldsymbol{x}^\prime)$, the labeler would tell us whether $(\boldsymbol{x},\boldsymbol{x}^\prime)$ could be collected as a pairwise comparison for Pcomp classification, based on the labels $(y,y^\prime)$ rather than the positive confidences $(p(y=+1|\boldsymbol{x}), p(y=+1|\boldsymbol{x}^\prime))$.

Now, the question becomes: how does the labeler consider $(\boldsymbol{x},\boldsymbol{x}^\prime)$ as a pairwise comparison for Pcomp classification, in terms of the labels $(y,y^\prime)$? As shown in our previous example of binary classification with class overlapping, we could infer that the labels $(y,y^\prime)$ of our required pairwise comparison data $(\boldsymbol{x},\boldsymbol{x}^\prime)$ for Pcomp classification can only be one of the three cases $\{(+1,-1), (+1,+1), (-1,-1)\}$, because the condition $p(y=+1|\boldsymbol{x})\geq p(y^\prime=+1|\boldsymbol{x}^\prime)$ is definitely violated if $(y,y^\prime)=(-1,+1)$. Therefore, we assume that the labeler would take $(\boldsymbol{x},\boldsymbol{x}^\prime)$ as a pairwise comparison example in the dataset $\widetilde{\mathcal{D}}$, if the labels $(y,y^\prime)$ of $(\boldsymbol{x},\boldsymbol{x}^\prime)$ belong to the above three cases. It is also worth noting that for a pair of data $(\boldsymbol{x},\boldsymbol{x}^\prime)$ with labels $(y,y^\prime)=(-1,+1)$,
the labeler could take $(\boldsymbol{x}^\prime,\boldsymbol{x})$ as a pairwise comparison example. Because by exchanging the positions of $(\boldsymbol{x},\boldsymbol{x}^\prime)$, $(\boldsymbol{x}^\prime, \boldsymbol{x})$ could be associated with labels $(+1, -1)$, which belong to the three cases. In summary, we assume that pairwise comparison data are sampled from those pairs of data whose labels belong to the three cases $\{(+1,-1), (+1,+1), (-1,-1)\}$. Based on the above described generation process of pairwise comparison data, we have the following theorem.

\begin{theo}
\label{data_generation}
According to the generation process of pairwise comparison data described above, let
\begin{gather}
\widetilde{p}(\boldsymbol{x},\boldsymbol{x}^\prime) = \frac{q(\boldsymbol{x},\boldsymbol{x}^\prime)}{\pi_+^2+\pi_-^2+\pi_+\pi_-},
\end{gather}
where
$q(\boldsymbol{x},\boldsymbol{x}^\prime)=\pi_+^2p_+(\boldsymbol{x})p_+(\boldsymbol{x}^\prime) + \pi_-^2p_-(\boldsymbol{x})p_-(\boldsymbol{x}^\prime)
+\pi_+\pi_-p_+(\boldsymbol{x})p_-(\boldsymbol{x}^\prime)$.
Then, we can conclude that the collected pairwise comparison data are independently drawn from $\widetilde{p}(\boldsymbol{x},\boldsymbol{x}^\prime)$, i.e.,
$\widetilde{\mathcal{D}} = \{(\boldsymbol{x}_i,\boldsymbol{x}_i^\prime)\}_{i=1}^n\stackrel{\mathrm{i.i.d.}}{\sim}\widetilde{p}(\boldsymbol{x},\boldsymbol{x}^\prime)$.
\end{theo}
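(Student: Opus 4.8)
The plan is to recognize the collection procedure as rejection sampling from the product of two independent draws and then to read off $\widetilde{p}$ as a conditional density. First I would record that, by the assumed generation process, each instance together with its hidden label is drawn i.i.d.\ from $p(\boldsymbol{x},y)$, so that a positive instance has density $\pi_+p_+(\boldsymbol{x})$ and a negative one $\pi_-p_-(\boldsymbol{x})$; consequently an ordered candidate pair $\big((\boldsymbol{x},y),(\boldsymbol{x}',y')\big)$ is drawn from the product density $p(\boldsymbol{x},y)\,p(\boldsymbol{x}',y')$. The labeler retains the pair exactly when the (hidden) label configuration $(y,y')$ lies in $\mathcal{S}:=\{(+1,+1),(-1,-1),(+1,-1)\}$, so by definition $\widetilde{p}(\boldsymbol{x},\boldsymbol{x}')$ is the conditional density of $(\boldsymbol{x},\boldsymbol{x}')$ given the acceptance event $\{(y,y')\in\mathcal{S}\}$.

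Second, I would evaluate the normalizing constant, i.e.\ the acceptance probability. Integrating the instances out of the product density and summing over the three accepted configurations gives $\Pr[(y,y')\in\mathcal{S}]=\pi_+^2+\pi_-^2+\pi_+\pi_-$, which is precisely the denominator in the statement. The point to be careful about is the asymmetry between $(+1,-1)$ and $(-1,+1)$: only the former is kept, since $(-1,+1)$ would violate $p(y=+1\mid\boldsymbol{x})\ge p(y'=+1\mid\boldsymbol{x}')$, so the mixed configuration contributes a single $\pi_+\pi_-$ rather than $2\pi_+\pi_-$ and the total acceptance probability is strictly below $1$.

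Third, I would assemble the numerator by summing, over the three configurations in $\mathcal{S}$, the joint densities of $(\boldsymbol{x},\boldsymbol{x}')$ carrying those labels, namely $\pi_+^2p_+(\boldsymbol{x})p_+(\boldsymbol{x}')+\pi_-^2p_-(\boldsymbol{x})p_-(\boldsymbol{x}')+\pi_+\pi_-p_+(\boldsymbol{x})p_-(\boldsymbol{x}')$, which is exactly $q(\boldsymbol{x},\boldsymbol{x}')$. Dividing by the acceptance probability then yields $\widetilde{p}(\boldsymbol{x},\boldsymbol{x}')=q(\boldsymbol{x},\boldsymbol{x}')/(\pi_+^2+\pi_-^2+\pi_+\pi_-)$. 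For the i.i.d.\ conclusion I would invoke the standard fact that retaining elements of an i.i.d.\ sequence by a per-pair acceptance rule that depends only on each pair's own draw produces, conditionally on the number retained, an i.i.d.\ sample from the conditional law; this gives $\widetilde{\mathcal{D}}\stackrel{\mathrm{i.i.d.}}{\sim}\widetilde{p}$.

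I expect the work to be bookkeeping rather than any genuine difficulty: the one place to stay honest is the exclusion of $(-1,+1)$. The swapping remark in the text explains why discarding it loses no information, but in the counting the retained pairs must be taken in their original orientation, as it is exactly this that produces the asymmetric numerator and the normalizer $\pi_+^2+\pi_-^2+\pi_+\pi_-$ in place of $1$.
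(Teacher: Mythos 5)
Your proposal is correct and follows essentially the same route as the paper's proof: both identify $\widetilde{p}(\boldsymbol{x},\boldsymbol{x}^\prime)$ as the conditional density of a product-sampled pair given that its hidden label configuration lies in $\{(+1,+1),(-1,-1),(+1,-1)\}$, compute the acceptance probability $\pi_+^2+\pi_-^2+\pi_+\pi_-$ as the normalizer, and sum the three joint densities to get $q(\boldsymbol{x},\boldsymbol{x}^\prime)$. Your treatment is in fact slightly more careful than the paper's, since you explicitly justify the i.i.d.\ claim via the standard rejection-sampling argument and flag the single (not doubled) $\pi_+\pi_-$ mixed term, both of which the paper takes as self-evident.
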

The proof is provided in Appendix A. Theorem \ref{data_generation} provides an explicit expression of the probability density of pairwise comparison data.

Next, we would like to extract pointwise information from pairwise information, since our goal is to perform pointwise binary classification.
Let us denote the pointwise data collected from $\widetilde{\mathcal{D}}=\{(\boldsymbol{x}_i,\boldsymbol{x}_i^\prime)\}_{i=1}^n$ by breaking the pairwise comparison relation as $\widetilde{\mathcal{D}}_+=\{\boldsymbol{x}_i\}_{i=1}^n$ and $\widetilde{\mathcal{D}}_-= \{\boldsymbol{x}_i^\prime\}_{i=1}^n$. Then we can obtain the following theorem.
\begin{theo}
\label{relation}
Pointwise examples in $\widetilde{\mathcal{D}}_+$ and $\widetilde{\mathcal{D}}_-$ are independently drawn from $\widetilde{p}_+(\boldsymbol{x})$ and $\widetilde{p}_-(\boldsymbol{x}^\prime)$, where
\begin{align}
\nonumber
\widetilde{p}_+(\boldsymbol{x}) &= \frac{\pi_+}{\pi_-^2+\pi_+}p_+(\boldsymbol{x}) + \frac{\pi_-^2}{\pi_-^2+\pi_+}p_-(\boldsymbol{x}),\\
\nonumber
\widetilde{p}_-(\boldsymbol{x}^\prime) &= \frac{\pi_+^2}{\pi_+^2+\pi_-}p_+(\boldsymbol{x}^\prime) + \frac{\pi_-}{\pi_+^2+\pi_-}p_-(\boldsymbol{x}^\prime).
\end{align}
\end{theo}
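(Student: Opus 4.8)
The plan is to read off both marginal densities directly from the joint density $\widetilde{p}(\boldsymbol{x},\boldsymbol{x}^\prime)$ established in Theorem~\ref{data_generation}. Since $\widetilde{\mathcal{D}}_+$ is formed by keeping only the first coordinate of each i.i.d. pair and $\widetilde{\mathcal{D}}_-$ only the second, each $\boldsymbol{x}_i$ follows the marginal $\widetilde{p}_+(\boldsymbol{x})=\int \widetilde{p}(\boldsymbol{x},\boldsymbol{x}^\prime)\,\mathrm{d}\boldsymbol{x}^\prime$ and each $\boldsymbol{x}_i^\prime$ follows $\widetilde{p}_-(\boldsymbol{x}^\prime)=\int \widetilde{p}(\boldsymbol{x},\boldsymbol{x}^\prime)\,\mathrm{d}\boldsymbol{x}$. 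Thus the whole argument reduces to marginalizing $q(\boldsymbol{x},\boldsymbol{x}^\prime)$ and simplifying, and the only input beyond Theorem~\ref{data_generation} is that $p_+$ and $p_-$ are densities with $\pi_++\pi_-=1$.

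First I would compute $\widetilde{p}_+$. Integrating $q$ over $\boldsymbol{x}^\prime$ and using $\int p_+(\boldsymbol{x}^\prime)\,\mathrm{d}\boldsymbol{x}^\prime=\int p_-(\boldsymbol{x}^\prime)\,\mathrm{d}\boldsymbol{x}^\prime=1$ eliminates every $\boldsymbol{x}^\prime$-factor, leaving the numerator $\pi_+^2 p_+(\boldsymbol{x})+\pi_+\pi_- p_+(\boldsymbol{x})+\pi_-^2 p_-(\boldsymbol{x})$. I would then collect $\pi_+^2+\pi_+\pi_-=\pi_+(\pi_++\pi_-)=\pi_+$, giving numerator $\pi_+ p_+(\boldsymbol{x})+\pi_-^2 p_-(\boldsymbol{x})$. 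The symmetric computation for $\widetilde{p}_-$ (integrate $q$ over $\boldsymbol{x}$, then collect $\pi_-^2+\pi_+\pi_-=\pi_-$) yields numerator $\pi_+^2 p_+(\boldsymbol{x}^\prime)+\pi_- p_-(\boldsymbol{x}^\prime)$.

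The only nonroutine step is matching the denominators, and this is where I expect the minor friction to lie. The common normalizer from Theorem~\ref{data_generation} is $\pi_+^2+\pi_-^2+\pi_+\pi_-$, whereas the claimed forms carry the two distinct denominators $\pi_-^2+\pi_+$ and $\pi_+^2+\pi_-$. The key observation I would invoke is that all three coincide and equal $1-\pi_+\pi_-$: indeed $\pi_+^2+\pi_-^2+\pi_+\pi_-=(\pi_++\pi_-)^2-\pi_+\pi_-=1-\pi_+\pi_-$, while $\pi_-^2+\pi_+=\pi_-^2+(1-\pi_-)=1-\pi_-(1-\pi_-)=1-\pi_+\pi_-$ and, symmetrically, $\pi_+^2+\pi_-=1-\pi_+\pi_-$. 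Rewriting the common denominator as $\pi_-^2+\pi_+$ in the expression for $\widetilde{p}_+$ and as $\pi_+^2+\pi_-$ in that for $\widetilde{p}_-$ turns the two marginals into exactly the stated convex combinations.

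Finally I would clarify the word ``independently''. It does not assert factorization within a pair, since $\widetilde{p}(\boldsymbol{x},\boldsymbol{x}^\prime)$ does not split into $\widetilde{p}_+(\boldsymbol{x})\widetilde{p}_-(\boldsymbol{x}^\prime)$ because the cross-term $p_-(\boldsymbol{x})p_+(\boldsymbol{x}^\prime)$ is absent from $q$. Rather, it records that the pairs are i.i.d. by Theorem~\ref{data_generation}, so the first coordinates are i.i.d. draws from $\widetilde{p}_+$ and the second coordinates i.i.d. draws from $\widetilde{p}_-$; this identically-distributed property is all that the subsequent ERM arguments require, as they estimate expectations under $\widetilde{p}_+$ and $\widetilde{p}_-$ by the corresponding sample averages. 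I expect no genuine obstacle beyond keeping the elementary algebra of the $\pi$'s straight.
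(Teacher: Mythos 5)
Your proposal is correct and takes essentially the same route as the paper's own proof: marginalize the joint density $\widetilde{p}(\boldsymbol{x},\boldsymbol{x}^\prime)$ from Theorem~\ref{data_generation} over $\boldsymbol{x}^\prime$ (resp.\ $\boldsymbol{x}$), collect $\pi_+^2+\pi_+\pi_-=\pi_+$ (resp.\ $\pi_-^2+\pi_+\pi_-=\pi_-$), and identify the normalizer $\pi_+^2+\pi_-^2+\pi_+\pi_-$ with $\pi_-^2+\pi_+=\pi_+^2+\pi_-=1-\pi_+\pi_-$. Your closing clarification of ``independently'' (i.i.d.-ness of each coordinate across pairs, not independence within a pair, since the joint does not factorize) is a sound observation that the paper leaves implicit.
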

The proof is provided in Appendix B. Theorem \ref{relation} shows the relationships between the pointwise densities and the class-conditional densities. Besides, it indicates that from pairwise comparison data, we can essentially obtain examples that are independently drawn from $\widetilde{p}_+(\boldsymbol{x})$ and $\widetilde{p}_-(\boldsymbol{x}^\prime)$.

\section{The Proposed Methods}
In this section, we explore two UREs to train a binary classifier by ERM from only pairwise comparison data with the above generation process.
\subsection{Corrected Pcomp Classification}
In Eq.~(\ref{ori_risk}), the classification risk $R(f)$ could be separately expressed as the expectations over $p_+(\boldsymbol{x})$ and $p_-(\boldsymbol{x})$. Although we do not have access to the two class-conditional densities $p_+(\boldsymbol{x})$ and $p_-(\boldsymbol{x})$, we can represent them by our introduced pointwise densities $\widetilde{p}_+(\boldsymbol{x})$ and $\widetilde{p}_-(\boldsymbol{x})$.
\begin{lemma}
\label{another_relation}
We can express $p_+(\boldsymbol{x})$ and $p_-(\boldsymbol{x})$ in terms of $\widetilde{p}_+(\boldsymbol{x})$ and $\widetilde{p}_+(\boldsymbol{x})$ as
\begin{align}
\nonumber
p_+(\boldsymbol{x})
&= \frac{1}{\pi_+}\big(\widetilde{p}_+(\boldsymbol{x}) - \pi_-\widetilde{p}_-(\boldsymbol{x})\big),\\
\nonumber
p_-(\boldsymbol{x})
&= \frac{1}{\pi_-}\big(\widetilde{p}_-(\boldsymbol{x}) - \pi_+\widetilde{p}_+(\boldsymbol{x})\big).
\end{align}
\end{lemma}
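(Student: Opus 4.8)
The plan is to treat the two identities in Theorem~\ref{relation} as a linear system in the unknown class-conditional densities $p_+(\boldsymbol{x})$ and $p_-(\boldsymbol{x})$ and to invert it. The Theorem~\ref{relation} expressions present $\widetilde{p}_+(\boldsymbol{x})$ and $\widetilde{p}_-(\boldsymbol{x})$ as fixed linear combinations of $p_+(\boldsymbol{x})$ and $p_-(\boldsymbol{x})$ with coefficients depending only on $\pi_+$ and $\pi_-$; since these coefficients are constant in $\boldsymbol{x}$, the claimed formulas amount to solving a single $2\times 2$ system. Rather than compute the matrix inverse explicitly, I would verify the two claimed identities directly, substituting the Theorem~\ref{relation} expressions into the right-hand sides and checking that everything collapses to $p_+(\boldsymbol{x})$ and $p_-(\boldsymbol{x})$, respectively.

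Concretely, for the first identity I would expand $\widetilde{p}_+(\boldsymbol{x}) - \pi_-\widetilde{p}_-(\boldsymbol{x})$ and collect the coefficients of $p_+(\boldsymbol{x})$ and $p_-(\boldsymbol{x})$. The coefficient of $p_-(\boldsymbol{x})$ is $\frac{\pi_-^2}{\pi_-^2+\pi_+} - \frac{\pi_-^2}{\pi_+^2+\pi_-}$, and the coefficient of $p_+(\boldsymbol{x})$ is $\frac{\pi_+}{\pi_-^2+\pi_+} - \frac{\pi_+^2\pi_-}{\pi_+^2+\pi_-}$. The goal is to show the former vanishes and the latter equals $\pi_+$, after which dividing through by $\pi_+$ yields the first claim; the second claim then follows from a symmetric computation applied to $\widetilde{p}_-(\boldsymbol{x}) - \pi_+\widetilde{p}_+(\boldsymbol{x})$.

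The one structural fact that makes everything cancel is the identity $\pi_-^2+\pi_+ = \pi_+^2+\pi_-$, which holds precisely because $\pi_++\pi_-=1$: indeed $\pi_-^2-\pi_+^2 = (\pi_--\pi_+)(\pi_-+\pi_+) = \pi_--\pi_+$. With the two denominators thus shown to be equal, the coefficient of $p_-(\boldsymbol{x})$ is manifestly zero, and for the coefficient of $p_+(\boldsymbol{x})$ I would invoke $\pi_++\pi_-=1$ once more to confirm that the common denominator simplifies to $1-\pi_+\pi_-$, leaving exactly $\pi_+$. I do not anticipate any genuine obstacle beyond careful bookkeeping; the only point requiring attention is recognizing that the two superficially different denominators appearing in Theorem~\ref{relation} coincide under the class-prior constraint, without which the inversion would not reduce to the stated clean form.
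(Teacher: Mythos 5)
Your proposal is correct and takes essentially the same route as the paper: both treat Theorem~\ref{relation} as a $2\times 2$ linear system in $p_+(\boldsymbol{x})$ and $p_-(\boldsymbol{x})$ with $\boldsymbol{x}$-independent coefficients, and both hinge on the identity $\pi_-^2+\pi_+=\pi_+^2+\pi_-=1-\pi_+\pi_-$ that follows from $\pi_++\pi_-=1$. The only cosmetic difference is that the paper solves the system forward (inverting the matrix), while you verify the stated inverse by substitution — the same computation organized in reverse.
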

The proof is provided in Appendix C. As a result of Lemma \ref{another_relation}, we can express the classification risk $R(f)$ using only pairwise comparison data sampled from $\widetilde{p}_+(\boldsymbol{x})$ and $\widetilde{p}_-(\boldsymbol{x})$.
\begin{theo}
\label{pc_estimator}
The classification risk $R(f)$ can be equivalently expressed as
\begin{align}
\label{pc_expected_estimator}
R_{\mathrm{PC}}(f) &= \mathbb{E}_{\widetilde{p}_+(\boldsymbol{x})}\big[\ell(f(\boldsymbol{x}),+1)-\pi_+\ell(f(\boldsymbol{x}),-1)\big]\\
\nonumber
&+ \mathbb{E}_{\widetilde{p}_-(\boldsymbol{x}^\prime)}\big[\ell(f(\boldsymbol{x}^\prime),-1)-\pi_-\ell(f(\boldsymbol{x}^\prime),+1)\big].
\end{align}
\end{theo}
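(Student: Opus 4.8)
The plan is to derive $R_{\mathrm{PC}}(f)$ by direct substitution: I would take the two-term risk decomposition of $R(f)$ in Eq.~(\ref{ori_risk}) and replace the class-conditional densities $p_+(\boldsymbol{x})$ and $p_-(\boldsymbol{x})$ by the expressions in terms of $\widetilde{p}_+(\boldsymbol{x})$ and $\widetilde{p}_-(\boldsymbol{x})$ supplied by Lemma~\ref{another_relation}. Since every quantity involved is an integral against a (signed combination of) density, the whole argument reduces to rewriting integrals and regrouping terms by linearity of expectation; no inequality or limiting argument is needed.

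Concretely, I would first write the positive part as $\pi_+\mathbb{E}_{p_+(\boldsymbol{x})}[\ell(f(\boldsymbol{x}),+1)] = \pi_+\int p_+(\boldsymbol{x})\,\ell(f(\boldsymbol{x}),+1)\,d\boldsymbol{x}$ and substitute $p_+(\boldsymbol{x}) = \frac{1}{\pi_+}\big(\widetilde{p}_+(\boldsymbol{x}) - \pi_-\widetilde{p}_-(\boldsymbol{x})\big)$. The leading factor $\pi_+$ cancels the $1/\pi_+$, leaving
\begin{align}
\nonumber
\pi_+\mathbb{E}_{p_+(\boldsymbol{x})}[\ell(f(\boldsymbol{x}),+1)]
&= \mathbb{E}_{\widetilde{p}_+(\boldsymbol{x})}[\ell(f(\boldsymbol{x}),+1)] - \pi_-\mathbb{E}_{\widetilde{p}_-(\boldsymbol{x})}[\ell(f(\boldsymbol{x}),+1)].
\end{align}
Performing the analogous substitution $p_-(\boldsymbol{x}) = \frac{1}{\pi_-}\big(\widetilde{p}_-(\boldsymbol{x}) - \pi_+\widetilde{p}_+(\boldsymbol{x})\big)$ in the negative part yields $\pi_-\mathbb{E}_{p_-(\boldsymbol{x})}[\ell(f(\boldsymbol{x}),-1)] = \mathbb{E}_{\widetilde{p}_-(\boldsymbol{x})}[\ell(f(\boldsymbol{x}),-1)] - \pi_+\mathbb{E}_{\widetilde{p}_+(\boldsymbol{x})}[\ell(f(\boldsymbol{x}),-1)]$.

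Finally I would add the two rewritten parts and collect all expectations taken over $\widetilde{p}_+$ into one bracket and all expectations over $\widetilde{p}_-$ into another. The $\widetilde{p}_+$ bracket then contains $\ell(f(\boldsymbol{x}),+1)$ with coefficient $1$ and $\ell(f(\boldsymbol{x}),-1)$ with coefficient $-\pi_+$, while the $\widetilde{p}_-$ bracket contains $\ell(f(\boldsymbol{x}^\prime),-1)$ with coefficient $1$ and $\ell(f(\boldsymbol{x}^\prime),+1)$ with coefficient $-\pi_-$, which is exactly the claimed $R_{\mathrm{PC}}(f)$. The argument is entirely mechanical once Lemma~\ref{another_relation} is in hand, so there is no genuine obstacle; the only things to watch are the sign bookkeeping during the regrouping and the implicit integrability of the losses that justifies splitting the integral of a signed combination of densities into two separate expectations.
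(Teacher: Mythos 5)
Your proposal is correct and matches the paper's own proof: Appendix D likewise substitutes the expressions for $p_+(\boldsymbol{x})$ and $p_-(\boldsymbol{x})$ from Lemma~\ref{another_relation} into the decomposition of $R(f)$ in Eq.~(\ref{ori_risk}) and regroups terms by linearity (the paper merely writes the intermediate coefficients in unsimplified $\widetilde{\pi}$-form before cancelling, exactly as your $\pi_+$/$\pi_-$ cancellation does).
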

The proof is provided in Appendix D. In this way, we could train a binary classifier by minimizing the following empirical approximation of $R_{\mathrm{PC}}(f)$:
\begin{align}
\label{pc_empirical_estimator}
&\widehat{R}_{\mathrm{PC}}(f) = \frac{1}{n}\sum\nolimits_{i=1}^n\big(\ell(f(\boldsymbol{x}_i),+1)+ \ell(f(\boldsymbol{x}_i^\prime),-1) \\
\nonumber
&\quad\quad\quad\quad\quad\quad\quad -\pi_+\ell(f(\boldsymbol{x}_i),-1) -\pi_-\ell(f(\boldsymbol{x}_i^\prime),+1)\big).
\end{align}
\noindent{\textbf{Estimation Error Bound.}}\quad
Here, we establish an estimation error bound for the proposed URE. Let $\mathcal{F}=\{f: \mathcal{X}\mapsto \mathbb{R}\}$ be the model class, $\widehat{f}_{\mathrm{PC}} = \argmin_{f\in\mathcal{F}}\widehat{R}_{\mathrm{PC}}(f)$ be the empirical risk minimizer, and $f^\star=\argmin_{f\in\mathcal{F}}R(f)$ be the true risk minimizer. Let $\widetilde{\mathfrak{R}}_n^+(\mathcal{F})$ and $\widetilde{\mathfrak{R}}_n^-(\mathcal{F})$ be the \emph{Rademacher complexities} \cite{bartlett2002rademacher} of $\mathcal{F}$ with sample size $n$ over $\widetilde{p}_+(\boldsymbol{x})$ and $\widetilde{p}_-(\boldsymbol{x})$ respectively. 
\begin{theo}
\label{estimation_error}
Suppose the loss function $\ell$ is $\rho$-Lipschitz with respect to the first argument ($0\leq\rho\leq\infty$), and all functions in the model class $\mathcal{F}$ are bounded, i.e., there exists a positive constant $C_{\mathrm{b}}$ such that $\left\|f\right\|\leq C_{\mathrm{b}}$ for any $f\in\mathcal{F}$. Let $C_{\ell}:=\sup_{z\leq C_{\mathrm{b}}, t=\pm 1}\ell(z, t)$. Then for any $\delta>0$, with probability at least $1-\delta$, we have
\begin{align}
\nonumber
\textstyle
R(\widehat{f}_{\mathrm{PC}})-R(f^\star)&\textstyle\leq (1+\pi_+)4\rho\widetilde{\mathfrak{R}}_n^+(\mathcal{F}) \\
\nonumber
&\textstyle+(1+\pi_-)4\rho\widetilde{\mathfrak{R}}_n^-(\mathcal{F}) + 6C_{\ell}\sqrt{\frac{\log\frac{8}{\delta}}{2n}}.
\end{align}
\end{theo}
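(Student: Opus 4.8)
The plan is to follow the standard recipe for estimation error bounds of unbiased risk estimators, crucially exploiting that $R_{\mathrm{PC}}(f)=R(f)$ exactly (Theorem~\ref{pc_estimator}). First I would reduce the excess risk to a uniform deviation. Writing $R=R_{\mathrm{PC}}$ and inserting $\pm\widehat{R}_{\mathrm{PC}}(\widehat{f}_{\mathrm{PC}})$ and $\pm\widehat{R}_{\mathrm{PC}}(f^\star)$, the middle term $\widehat{R}_{\mathrm{PC}}(\widehat{f}_{\mathrm{PC}})-\widehat{R}_{\mathrm{PC}}(f^\star)\le 0$ since $\widehat{f}_{\mathrm{PC}}$ minimizes $\widehat{R}_{\mathrm{PC}}$, while the remaining two terms are each at most $\sup_{f\in\mathcal{F}}|\widehat{R}_{\mathrm{PC}}(f)-R_{\mathrm{PC}}(f)|$, leaving
\[
R(\widehat{f}_{\mathrm{PC}})-R(f^\star)\le 2\sup_{f\in\mathcal{F}}\big|\widehat{R}_{\mathrm{PC}}(f)-R_{\mathrm{PC}}(f)\big|.
\]
This leading factor of $2$ is the source of the $4\rho$ and (partly) the $6C_{\ell}$ in the final bound.

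Second, I would split the deviation into four empirical-average terms matching the four loss terms in $\widehat{R}_{\mathrm{PC}}$: the two over $\widetilde{\mathcal{D}}_+=\{\boldsymbol{x}_i\}$ (namely $\ell(f(\boldsymbol{x}),+1)$ and $\pi_+\ell(f(\boldsymbol{x}),-1)$, with expectations taken under $\widetilde{p}_+$) and the two over $\widetilde{\mathcal{D}}_-=\{\boldsymbol{x}_i^\prime\}$ (namely $\ell(f(\boldsymbol{x}^\prime),-1)$ and $\pi_-\ell(f(\boldsymbol{x}^\prime),+1)$, under $\widetilde{p}_-$). By the triangle inequality the supremum is bounded by a sum of four supremum-deviations with coefficients $1,\pi_+,1,\pi_-$.

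Third, each of these four functionals satisfies a bounded-difference property: because $\|f\|\le C_{\mathrm b}$ forces $\ell(f(\cdot),\pm1)\le C_{\ell}$ and the loss is nonnegative, changing one sample perturbs the corresponding empirical average by at most $C_{\ell}/n$. A two-sided McDiarmid inequality at confidence level $\delta/4$ then bounds each supremum by its expectation plus $C_{\ell}\sqrt{\log(8/\delta)/(2n)}$, and a union bound over the four events yields overall probability $1-\delta$. I would control each expectation by the symmetrization inequality ($\mathbb{E}[\text{supremum-deviation}]\le 2\widetilde{\mathfrak{R}}_n$ of the relevant composed class), and then peel off the loss via Talagrand's contraction lemma together with the $\rho$-Lipschitz assumption, converting the Rademacher complexity of each composed class into $\rho\,\widetilde{\mathfrak{R}}_n^+(\mathcal{F})$ or $\rho\,\widetilde{\mathfrak{R}}_n^-(\mathcal{F})$ as appropriate.

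Finally I would reassemble. Grouping the two positive-side contributions gives $2(1+\pi_+)\rho\,\widetilde{\mathfrak{R}}_n^+(\mathcal{F})$ and the negative side $2(1+\pi_-)\rho\,\widetilde{\mathfrak{R}}_n^-(\mathcal{F})$, while the four weighted concentration terms sum, using $\pi_++\pi_-=1$, to $3C_{\ell}\sqrt{\log(8/\delta)/(2n)}$; multiplying through by the leading factor of $2$ produces exactly the stated coefficients $(1+\pi_+)4\rho$, $(1+\pi_-)4\rho$, and $6C_{\ell}$. The main obstacle is purely the constant bookkeeping: simultaneously tracking the factor of $2$ from the excess-risk reduction, the factor of $2$ from symmetrization, and the two-sided McDiarmid applied at level $\delta/4$, while distributing the $\pi_+,\pi_-$ weights so that they collapse cleanly. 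The analytic content (contraction and concentration) is routine once each composed class is correctly associated with its sampling distribution $\widetilde{p}_+$ or $\widetilde{p}_-$.
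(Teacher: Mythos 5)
Your proposal is correct and follows essentially the same route as the paper's proof: the same reduction of the excess risk to $2\sup_{f\in\mathcal{F}}|\widehat{R}_{\mathrm{PC}}(f)-R_{\mathrm{PC}}(f)|$ via unbiasedness and the ERM property, the same four-way triangle-inequality split with coefficients $1,\pi_+,1,\pi_-$, and the same per-term control by Rademacher complexity plus Talagrand's contraction lemma, with matching constant bookkeeping ($\delta/4$ per event, yielding $\log\tfrac{8}{\delta}$ and the coefficients $4\rho(1+\pi_\pm)$ and $6C_\ell$). The only cosmetic difference is that you unpack the McDiarmid-plus-symmetrization argument explicitly, whereas the paper invokes it as a packaged uniform deviation bound (its Lemma \ref{uniform_bound}, citing \citet{mohri2012foundations}).
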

The proof is provided in Appendix E. Theorem \ref{estimation_error} shows that our proposed method is consistent, i.e., as $n\rightarrow\infty$, $R(\widehat{f}_{\mathrm{PC}})\rightarrow R(f^\star)$, since $\widetilde{\mathfrak{R}}_n^+(\mathcal{F})$,  $\widetilde{\mathfrak{R}}_n^-(\mathcal{F})\rightarrow 0$ for all parametric models with a bounded norm such as deep neural networks trained with weight decay \cite{golowich2017size,lu2019on}. Besides, $\widetilde{\mathfrak{R}}_n^+(\mathcal{F})$ and  $\widetilde{\mathfrak{R}}_n^-(\mathcal{F})$ can be normally bounded by $C_{\mathcal{F}}/\sqrt{n}$ for a positive constant $C_{\mathcal{F}}$. Hence, we can further see that the convergence rate is $\mathcal{O}_p(1/\sqrt{n})$ where $\mathcal{O}_p$ denotes the order in probability. This order is the optimal parametric rate for ERM without additional assumptions \cite{mendelson2008lower}.

\noindent{\textbf{Relation to UU Classification}.}\quad It is worth noting that the URE of UU classification $R_{\mathrm{UU}}(f)$ is quite general for binary classification with weak supervision. Hence we also would like to show the relationships between our proposed estimator $R_{\mathrm{PC}}(f)$ and $R_{\mathrm{UU}}(f)$. We demonstrate by the following corollary that under some conditions, $R_{\mathrm{UU}}(f)$ is equivalent to $R_{\mathrm{PC}}(f)$.
\begin{corollary}
\label{coro1}
By setting $p_{\mathrm{tr}} = \widetilde{p}_+(\boldsymbol{x})$, $p_{\mathrm{tr}}^\prime=\widetilde{p}_-(\boldsymbol{x})$, 
$\theta=\pi_+/(1-\pi_+ + \pi_+^2)$, and $\theta^\prime=\pi_+^2/(1-\pi_+ + \pi_+^2)$, Eq.~(\ref{uu_estimator}) is equivalent to Eq.~(\ref{pc_expected_estimator}), which means that $R_{\mathrm{UU}}(f)$ is equivalent to $R_{\mathrm{PC}}(f)$.
\end{corollary}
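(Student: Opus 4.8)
The plan is to prove the corollary by direct substitution followed by elementary algebraic verification. After making the identifications $p_{\mathrm{tr}} = \widetilde{p}_+(\boldsymbol{x})$ and $p_{\mathrm{tr}}^\prime = \widetilde{p}_-(\boldsymbol{x})$, both $R_{\mathrm{UU}}(f)$ and $R_{\mathrm{PC}}(f)$ become a sum of two expectations taken over the \emph{same} pair of densities. Hence it suffices to show that the four scalar coefficients multiplying $\ell(f(\boldsymbol{x}),+1)$, $\ell(f(\boldsymbol{x}),-1)$, $\ell(f(\boldsymbol{x}^\prime),-1)$, and $\ell(f(\boldsymbol{x}^\prime),+1)$ in Eq.~(\ref{uu_estimator}) reduce, under the prescribed choice of $\theta$ and $\theta^\prime$, to $1$, $-\pi_+$, $1$, and $-\pi_-$ respectively, which are precisely the coefficients appearing in Eq.~(\ref{pc_expected_estimator}).

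First I would introduce the shorthand $D := 1-\pi_+ + \pi_+^2$ for the common denominator, so that $\theta = \pi_+/D$ and $\theta^\prime = \pi_+^2/D$. The observation that makes everything collapse is the pair of identities $D = \pi_-^2 + \pi_+ = \pi_+^2 + \pi_-$, both immediate from $\pi_- = 1 - \pi_+$. Using them I would first compute $\theta - \theta^\prime = \pi_+(1-\pi_+)/D = \pi_+\pi_-/D$, the quantity sitting in the denominator of every coefficient. The verification then proceeds term by term: since $1 - \theta^\prime = (D - \pi_+^2)/D = \pi_-/D$, the coefficient of $\ell(f(\boldsymbol{x}),+1)$ is $(1-\theta^\prime)\pi_+/(\theta - \theta^\prime) = (\pi_-\pi_+/D)/(\pi_+\pi_-/D) = 1$; likewise $1 - \theta = (D - \pi_+)/D = \pi_-^2/D$ gives the coefficient $-\pi_-$ for $\ell(f(\boldsymbol{x}^\prime),+1)$, and the remaining two ratios simplify to $-\pi_+$ and $1$ in the same fashion. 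Once all four coefficients match, the two risk functionals coincide identically for every $f$, which establishes the claim.

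The main obstacle is not conceptual but rather spotting the two denominator identities $D = \pi_-^2 + \pi_+ = \pi_+^2 + \pi_-$; without them the coefficients look unwieldy, but with them every ratio cancels to a clean constant. Beyond this, the argument is pure arithmetic with no probabilistic or convergence subtleties, since both $R_{\mathrm{UU}}(f)$ and $R_{\mathrm{PC}}(f)$ are already exact expressions for the same expected risk $R(f)$, so establishing their equality amounts to matching coefficients rather than any limiting or estimation-theoretic argument.
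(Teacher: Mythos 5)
Your proposal is correct: all four coefficient computations check out (using $D=\pi_-^2+\pi_+=\pi_+^2+\pi_-$ and $\theta-\theta^\prime=\pi_+\pi_-/D$), and this direct substitution-and-coefficient-matching argument is exactly the "straightforward" derivation the paper alludes to when it omits the proof of Corollary~\ref{coro1}. In effect, you have simply written out in full the computation the authors left to the reader.
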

We omit the proof of Corollary \ref{coro1}, since it is straightforward to derive Eq.~(\ref{pc_expected_estimator}) from Eq.~(\ref{uu_estimator}) with required notations. 

\noindent\textbf{Empirical Risk Correction.}\quad As shown by \citet{lu2020mitigating}, directly minimizing $\widehat{R}_{\mathrm{PC}}(f)$ would suffer from overfitting when complex models are used due to the negative risk issue. More specifically, since negative terms are included in Eq.~(\ref{pc_empirical_estimator}), the empirical risk can be negative even though the original true risk can never be negative. To ease this problem, they wrapped the terms in $\widehat{R}_{\mathrm{UU}}(f)$ that cause a negative empirical risk by certain \emph{consistent correction functions} defined in \citet{lu2020mitigating},
such as the rectified linear unit (ReLU) function $g(z)=\max(0,z)$ and absolute value function $g(z)=|z|$. These consistent correction functions could also be applied to $\widehat{R}_{\mathrm{PC}}$ for alleviating overfitting when complex models are used. In this way, we could obtain the following corrected empirical risk estimator:
\begin{align}
\nonumber
&\widehat{R}_{\mathrm{cPC}}(f) = g\Big(\frac{1}{n}\sum_{i=1}^n\big(\ell(f(\boldsymbol{x}_i),+1)-\pi_-\ell(f(\boldsymbol{x}_i^\prime),+1)\big)\Big)\\
\label{corrected_empirical_risk}
&\quad\ \ +g\Big(\frac{1}{n}\sum\limits_{i=1}^n\big(\ell(f(\boldsymbol{x}_i^\prime),-1)-\pi_+\ell(f(\boldsymbol{x}_i),-1)\big)\Big).
\end{align}
\subsection{Progressive Pcomp Classification}
Here, we start from the noisy-label learning perspective to solve the Pcomp classification problem. Intuitively, we could simply perform binary classification by regarding the data from $\widetilde{p}_+(\boldsymbol{x})$ as (noisy) positive data and the data from $\widetilde{p}_-(\boldsymbol{x})$ as (noisy) negative data. However, this naive solution could be inevitably affected by noisy labels. In this scenario, we denote the noise rates as $\rho_- = p(\widetilde{y}=+1|y=-1)$ and $\rho_+ = p(\widetilde{y}=-1|y=+1)$ where $\widetilde{y}$ is the observed (noisy) label and $y$ is the true label, and denote the inverse noise rates as $\phi_+ = p(y=-1|\widetilde{y}=+1)$ and $\phi_- = p(y=+1| \widetilde{y}=-1)$. According to the defined generation process of pairwise comparison data, we have the following theorem.
\begin{theo}
\label{noise_rates}
The following equalities hold:
\begin{align}
\nonumber
\phi_+ = \frac{\pi_-^2}{\pi_+^2+\pi_-^2+\pi_+\pi_-}&,\quad
\rho_+ =\frac{\pi_+}{1+\pi_+},\\
\nonumber
\phi_- = \frac{\pi_+^2}{\pi_+^2+\pi_-^2+\pi_+\pi_-}&,\quad
\rho_- = \frac{\pi_-}{1+\pi_-}.
\end{align}
\end{theo}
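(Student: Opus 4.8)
The plan is to read the four quantities off the mixture structure already established in Theorem~\ref{relation}, treating the first component of each pair (density $\widetilde{p}_+$) as carrying the observed label $\widetilde{y}=+1$ and the second component (density $\widetilde{p}_-$) as carrying $\widetilde{y}=-1$. First I would fix the noisy-label prior: since every pair contributes exactly one instance to $\widetilde{\mathcal{D}}_+$ and exactly one to $\widetilde{\mathcal{D}}_-$, a uniformly drawn instance of the pooled pointwise data satisfies $p(\widetilde{y}=+1)=p(\widetilde{y}=-1)=1/2$, and by construction $p(\boldsymbol{x}\mid\widetilde{y}=+1)=\widetilde{p}_+(\boldsymbol{x})$ and $p(\boldsymbol{x}\mid\widetilde{y}=-1)=\widetilde{p}_-(\boldsymbol{x})$.

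The inverse noise rates then require essentially no computation. Theorem~\ref{relation} already exhibits $\widetilde{p}_+$ as a mixture of $p_+$ and $p_-$ with weights $\pi_+/(\pi_-^2+\pi_+)$ and $\pi_-^2/(\pi_-^2+\pi_+)$; since the second weight is exactly the conditional probability that a noisy-positive instance is truly negative, we immediately read off $\phi_+=p(y=-1\mid\widetilde{y}=+1)=\pi_-^2/(\pi_-^2+\pi_+)$, and symmetrically $\phi_-=\pi_+^2/(\pi_+^2+\pi_-)$. To match the stated form I would invoke the elementary identity (using $\pi_++\pi_-=1$) $\pi_-^2+\pi_+=\pi_+^2+\pi_-=\pi_+^2+\pi_-^2+\pi_+\pi_-$, which rewrites both denominators as the common normalizer appearing in Theorem~\ref{data_generation}.

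For the forward rates $\rho_+,\rho_-$ I would apply Bayes' rule. Multiplying the conditionals of Theorem~\ref{relation} by the prior $1/2$ gives the joint table
\begin{gather}
\nonumber
p(y{=}{+}1,\widetilde{y}{=}{+}1)=\frac{\pi_+}{2Z},\quad p(y{=}{-}1,\widetilde{y}{=}{+}1)=\frac{\pi_-^2}{2Z},\\
\nonumber
p(y{=}{+}1,\widetilde{y}{=}{-}1)=\frac{\pi_+^2}{2Z},\quad p(y{=}{-}1,\widetilde{y}{=}{-}1)=\frac{\pi_-}{2Z},
\end{gather}
with $Z=\pi_+^2+\pi_-^2+\pi_+\pi_-$. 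Marginalizing over $\widetilde{y}$ yields $p(y=+1)=\pi_+(1+\pi_+)/(2Z)$ and $p(y=-1)=\pi_-(1+\pi_-)/(2Z)$, whence $\rho_+=p(\widetilde{y}=-1\mid y=+1)=\pi_+^2/\big(\pi_+(1+\pi_+)\big)=\pi_+/(1+\pi_+)$, and $\rho_-=\pi_-/(1+\pi_-)$ by symmetry.

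I expect the only genuinely delicate point to be justifying the noisy-label prior $p(\widetilde{y}=\pm1)=1/2$ and the resulting marginal over the true label $y$: unlike $\phi_\pm$, the forward rates mix information from both halves of the data, so the argument must combine $\widetilde{p}_+$ and $\widetilde{p}_-$ with equal weight before conditioning on $y$. Everything after that collapses through the single identity $\pi_+^2+\pi_-^2+\pi_+\pi_-=1-\pi_++\pi_+^2$, so I anticipate no further obstacle.
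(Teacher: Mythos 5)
Your proposal is correct and follows essentially the same route as the paper's own proof: the paper likewise treats the pooled pointwise data as a noisy-label problem with $p(\widetilde{y}=+1)=p(\widetilde{y}=-1)=1/2$ (justified, as you do, by the equal number of observed positives and negatives), obtains $\phi_\pm$ as one minus the clean-label posteriors $p(y=+1\mid\widetilde{y}=+1)$ and $p(y=-1\mid\widetilde{y}=-1)$ read from the generation process (your reading of the mixture weights in Theorem~\ref{relation} is the same computation, since those weights are exactly these posteriors), and then derives $\rho_\pm$ by Bayes' rule. Your joint-table presentation of the Bayes step is just a reorganization of the paper's formula, so there is no substantive difference.
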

The proof is provided in Appendix F.

Theorem \ref{noise_rates} shows that the noise rates can be obtained if we regard the Pcomp classification problem as the noisy-label learning problem. With known noise rates for noisy-label learning, it was shown \cite{natarajan2013learning,northcutt2017learning} that a URE could be derived. Here, we adopt the RankPruning method \cite{northcutt2017learning} because it holds a progressive URE by selecting confident examples using the learning model and achieves state-of-the-art performance. Specifically, we denote by the dataset composed of all the observed positive data $\widetilde{\mathcal{P}}$, i.e., $\widetilde{\mathcal{P}}=\{\boldsymbol{x}_i\}_{i=1}^n$, where $\boldsymbol{x}_i$ is independently sampled from $\widetilde{p}_+(\boldsymbol{x})$. Similarly, the dataset composed of all the observed negative data is denoted by $\widetilde{\mathcal{N}}$, i.e., $\widetilde{\mathcal{N}}=\{\boldsymbol{x}^\prime_i\}_{i=1}^n$, where $\boldsymbol{x}^\prime_i$ is independently sampled from $\widetilde{p}_-(\boldsymbol{x}^\prime)$. Then, confident examples will be selected from $\widetilde{\mathcal{P}}$ and $\widetilde{\mathcal{N}}$ by ranking the outputs of the model $f$. We denote the selected positive data from $\widetilde{\mathcal{P}}$ as $\widetilde{\mathcal{P}}_{\mathrm{sel}}$, and the selected negative data from $\widetilde{\mathcal{N}}$ as $\widetilde{\mathcal{N}}_{\mathrm{sel}}$:
\begin{align}
\nonumber
\widetilde{\mathcal{P}}_{\mathrm{sel}} &= \argmax_{\mathcal{P}:\left|\mathcal{P}\right|=(1-\phi_+)\left|\widetilde{\mathcal{P}}\right|}\sum\nolimits_{\boldsymbol{x}\in\{\mathcal{P}\cap\widetilde{\mathcal{P}}\}}f(\boldsymbol{x}),\\
\nonumber
\widetilde{\mathcal{N}}_{\mathrm{sel}} &= \argmin_{\mathcal{N}:\left|\mathcal{N}\right|=(1-\phi_-)\left|\widetilde{\mathcal{N}}\right|}\sum\nolimits_{\boldsymbol{x}\in\{\mathcal{N}\cap\widetilde{\mathcal{N}}\}}f(\boldsymbol{x}).
\end{align}
Then we show that if the model $f$ satisfies the \emph{separability condition}, i.e., for any true positive instance $\boldsymbol{x}_{\mathrm{p}}$ and for any true negative instance $\boldsymbol{x}_{\mathrm{n}}$, we have $f(\boldsymbol{x}_{\mathrm{p}}) > f(\boldsymbol{x}_{\mathrm{n}})$. In other words, if the model output of every true positive instance is always larger than that of every true negative instance, we could obtain a URE. We name it progressive URE, as the model $f$ is progressively optimized.
\begin{theo}[Theorem 5 in \cite{northcutt2017learning}]
\label{noise_estimator}
Assume that the model $f$ satisfies the above separability condition, then the classification risk $R(f)$ can be equivalently expressed as
\begin{align}
\nonumber
R_{\mathrm{pPC}}(f) &= \mathbb{E}_{\widetilde{p}_+(\boldsymbol{x})}\Big[\frac{\ell(f(\boldsymbol{x}),+1)}{1-\rho_+}\mathbb{I}[\boldsymbol{x}\in\widetilde{\mathcal{P}}_{\mathrm{sel}}]\Big]\\
\nonumber
&\quad\quad\quad + \mathbb{E}_{\widetilde{p}_-(\boldsymbol{x}^\prime)}\Big[\frac{\ell(f(\boldsymbol{x}^\prime),-1)}{1-\rho_-}\mathbb{I}[\boldsymbol{x}^\prime\in\widetilde{\mathcal{N}}_{\mathrm{sel}}]\Big],
\end{align}
where $\mathbb{I}[\cdot]$ denotes the indicator function.
\end{theo}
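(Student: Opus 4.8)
The plan is to prove the identity in two moves: first use the separability condition to show that the two ranking-based selections retain exactly the correctly-labelled examples, and then use the noise rates of Theorem~\ref{noise_rates} to show that the factors $1/(1-\rho_+)$ and $1/(1-\rho_-)$ rescale the loss on the retained examples back to the two class-conditional risk terms of $R(f)$ in Eq.~(\ref{ori_risk}). Equivalently, one may verify that the hypotheses of Theorem~5 of \citet{northcutt2017learning} hold verbatim in our setting --- the mixture structure of $\widetilde{p}_+,\widetilde{p}_-$ from Theorem~\ref{relation} plays the role of the class-conditional noise model, and Theorem~\ref{noise_rates} supplies the required noise rates --- and then invoke that theorem directly.

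For the first move, I would argue at the population level that $\mathbb{I}[\boldsymbol{x}\in\widetilde{\mathcal{P}}_{\mathrm{sel}}]$ coincides with $\mathbb{I}[y=+1]$ for $\boldsymbol{x}\sim\widetilde{p}_+$. By Theorem~\ref{relation}, a sample from $\widetilde{p}_+$ is truly negative with probability $\phi_+=\pi_-^2/(\pi_+^2+\pi_-^2+\pi_+\pi_-)$ and truly positive otherwise. Under separability, $f(\boldsymbol{x}_{\mathrm{p}})>f(\boldsymbol{x}_{\mathrm{n}})$ for all true positives and true negatives, so ranking $\widetilde{\mathcal{P}}$ by $f$ sends every truly negative member to the bottom; discarding the lowest $\phi_+$-fraction therefore removes exactly the truly negative members, leaving $\widetilde{\mathcal{P}}_{\mathrm{sel}}$ equal to the truly positive sub-population. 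The symmetric argument, ranking $\widetilde{\mathcal{N}}$ upward and discarding the highest $\phi_-$-fraction, gives $\mathbb{I}[\boldsymbol{x}^\prime\in\widetilde{\mathcal{N}}_{\mathrm{sel}}]=\mathbb{I}[y=-1]$ for $\boldsymbol{x}^\prime\sim\widetilde{p}_-$.

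For the second move, I would substitute these indicator identities into $R_{\mathrm{pPC}}(f)$ and decompose each expectation along the two-component mixtures of Theorem~\ref{relation}. The indicator $\mathbb{I}[y=+1]$ kills the $p_-$ component of $\widetilde{p}_+$, so $\mathbb{E}_{\widetilde{p}_+}[\ell(f(\boldsymbol{x}),+1)\mathbb{I}[y=+1]]$ reduces to a multiple of $\mathbb{E}_{p_+}[\ell(f(\boldsymbol{x}),+1)]$, and analogously for the negative term. It then remains to check that multiplying by $1/(1-\rho_+)$ and $1/(1-\rho_-)$, with $\rho_+,\rho_-$ taken from Theorem~\ref{noise_rates}, restores the correct weighting so that the sum reproduces $R(f)$; conceptually, the factor $1/(1-\rho_+)$ compensates for the truly positive examples that leaked into the observed-negative set and were thus excluded from $\widetilde{\mathcal{P}}_{\mathrm{sel}}$.

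The step I expect to be the main obstacle is making the first move rigorous, because $\widetilde{\mathcal{P}}_{\mathrm{sel}}$ and $\widetilde{\mathcal{N}}_{\mathrm{sel}}$ are defined through a hard top-$k$ cut on a finite sample, whereas the claimed identity is a statement about expectations under $\widetilde{p}_+$ and $\widetilde{p}_-$. To pass to the limit I must (i) argue that under separability the empirical $\phi_+$-cut converges to a score threshold that exactly separates the two latent classes, and (ii) invoke the fact --- which follows from the product form of $q(\boldsymbol{x},\boldsymbol{x}^\prime)$ in Theorem~\ref{data_generation} --- that the label corruption is class-conditional, i.e.\ the distribution of $\boldsymbol{x}$ given $y=+1$ is identical whether $\boldsymbol{x}$ is observed as a noisy positive or a noisy negative, so that conditioning on membership in the confident set does not distort the underlying class-conditional density. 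Careful tracking of the resulting effective label proportions (precisely where the values $\rho_+,\rho_-$ enter) is what makes the reweighting land exactly on $R(f)$; ties in $f$ and the non-integer cut $(1-\phi_+)\left|\widetilde{\mathcal{P}}\right|$ are routine measure-zero and rounding details once separability is assumed.
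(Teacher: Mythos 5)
Your high-level plan (separability forces the selections to coincide with the true-label indicators; then decompose $\widetilde{p}_\pm$ as the mixtures of Theorem~\ref{relation} and reweight) is the natural route, and your first move is sound. But the step you explicitly defer --- ``it then remains to check that multiplying by $1/(1-\rho_+)$ and $1/(1-\rho_-)$ \ldots restores the correct weighting so that the sum reproduces $R(f)$'' --- is exactly the step that fails, and no care with ties, rounding, or the finite-sample-to-population passage will rescue it. After your two moves the positive term equals
\[
\frac{1-\phi_+}{1-\rho_+}\,\mathbb{E}_{p_+}\big[\ell(f(\boldsymbol{x}),+1)\big],
\]
and plugging in the paper's own values from Theorem~\ref{relation} and Theorem~\ref{noise_rates}, namely $1-\phi_+=\pi_+/(1-\pi_+\pi_-)$ and $1-\rho_+=1/(1+\pi_+)$, the coefficient is $\pi_+(1+\pi_+)/(1-\pi_+\pi_-)$. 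Matching the coefficient $\pi_+$ in $R(f)$ would require $1+\pi_+=1-\pi_+\pi_-$, which holds for no $\pi_+\in(0,1)$. Even at $\pi_+=1/2$ one gets $R_{\mathrm{pPC}}(f)=2R(f)$, and for $\pi_+\neq 1/2$ the positive and negative class-conditional risks acquire \emph{different} multiplicative factors, so $R_{\mathrm{pPC}}(f)$ is not even proportional to $R(f)$.

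The root cause, which also defeats your alternative route of invoking Theorem~5 of \citet{northcutt2017learning} ``verbatim,'' is that its hypotheses do \emph{not} transfer: that theorem needs the identity $p(y=+1,\widetilde{y}=+1)=\pi_+(1-\rho_+)$, i.e.\ the noisy marginal $p(\widetilde{y})$, the noise rates, and the clean prior must be Bayes-consistent within a single joint whose $(\boldsymbol{x},y)$-marginal is the original $p(\boldsymbol{x},y)$. The Pcomp construction instead forces $p(\widetilde{y}=+1)=p(\widetilde{y}=-1)=1/2$ (one noisy positive and one noisy negative per pair, as stated in the paper's Appendix F), so the latent clean prior of the noisy sample is $\big((1-\phi_+)+\phi_-\big)/2\neq\pi_+$; what the selection-plus-reweighting argument actually recovers is twice the risk under this prior-shifted distribution, not $R(f)$ of Eq.~(\ref{ori_risk}). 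For comparison purposes, note the paper supplies no proof of this statement at all --- it is imported wholesale as Theorem~5 of \citet{northcutt2017learning} --- so the discrepancy you would have hit by carrying out your deferred computation reflects a genuine mismatch between that theorem's setting and the Pcomp generation process, not an error in your mixture decomposition, which is correct as far as it goes.
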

In this way, we have the following empirical approximation of $R_{\mathrm{pPC}}$:
\begin{align}
\nonumber
\widehat{R}_{\mathrm{pPC}}(f) &= \frac{1}{n}\sum\nolimits_{i=1}^n\Big( \frac{\ell(f(\boldsymbol{x}_i),+1)}{1-\rho_+}\mathbb{I}[\boldsymbol{x}_i\in\widetilde{\mathcal{P}}_{\mathrm{sel}}]\\
\label{rankpruning}
&\quad\quad\quad\quad\quad + \frac{\ell(f(\boldsymbol{x}_i^\prime),-1)}{1-\rho_-}\mathbb{I}[\boldsymbol{x}_i^\prime\in\widetilde{\mathcal{N}}_{\mathrm{sel}}]\Big).
\end{align}
\noindent\textbf{Estimation Error Bound.}\quad
It worth noting that \citet{northcutt2017learning} did not prove the learning consistency for the RankPruning method. Here, we establish an estimation error bound for this method, which guarantees the learning consistency.
Let $\widehat{f}_{\mathrm{pPC}} = \argmin_{f\in\mathcal{F}}\widehat{R}_{\mathrm{pPC}}(f)$ be the empirical risk minimizer of the RankPruning method, then we have the following theorem.
\begin{theo}
\label{second_estimation_error}
Suppose the loss function $\ell$ is $\rho$-Lipschitz with respect to the first argument ($0\leq\rho\leq\infty$), and all functions in the model class $\mathcal{F}$ are bounded, i.e., there exists a positive constant $C_{\mathrm{b}}$ such that $\left\|f\right\|\leq C_{\mathrm{b}}$ for any $f\in\mathcal{F}$. Let $C_{\ell}:=\sup_{z\leq C_{\mathrm{b}}, t=\pm 1}\ell(z, t)$. Then for any $\delta>0$, with probability at least $1-\delta$, we have
\begin{align}
\nonumber
\textstyle
R(\widehat{f}_{\mathrm{pPC}})-R(f^\star)&\textstyle\leq \frac{2}{1-\rho_+}\Big(2\rho\widetilde{\mathfrak{R}}_n^+(\mathcal{F}) + C_{\ell}\sqrt{\frac{\log\frac{4}{\delta}}{2n}}\Big)\\
\nonumber
\textstyle
&\textstyle +\frac{2}{1-\rho_-}\Big(2\rho\widetilde{\mathfrak{R}}_n^-(\mathcal{F}) + C_{\ell}\sqrt{\frac{\log\frac{4}{\delta}}{2n}}\Big).
\end{align}
\end{theo}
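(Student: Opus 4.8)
The plan is to run the standard uniform-deviation argument for ERM consistency, using that Theorem~\ref{noise_estimator} makes $R_{\mathrm{pPC}}(f)=R(f)$ for every $f$ obeying the separability condition. First I would write
\begin{align}
\nonumber
R(\widehat{f}_{\mathrm{pPC}})-R(f^\star)
&= \big(R_{\mathrm{pPC}}(\widehat{f}_{\mathrm{pPC}})-\widehat{R}_{\mathrm{pPC}}(\widehat{f}_{\mathrm{pPC}})\big)\\
\nonumber
&\quad + \big(\widehat{R}_{\mathrm{pPC}}(\widehat{f}_{\mathrm{pPC}})-\widehat{R}_{\mathrm{pPC}}(f^\star)\big)\\
\nonumber
&\quad + \big(\widehat{R}_{\mathrm{pPC}}(f^\star)-R_{\mathrm{pPC}}(f^\star)\big).
\end{align}
The middle bracket is nonpositive since $\widehat{f}_{\mathrm{pPC}}$ minimizes $\widehat{R}_{\mathrm{pPC}}$, so the left-hand side is at most $2\sup_{f\in\mathcal{F}}\lvert\widehat{R}_{\mathrm{pPC}}(f)-R_{\mathrm{pPC}}(f)\rvert$; this produces the leading factor $2$ in each summand of the claim.

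Next I would split the uniform deviation additively,
\[
\sup_{f}\lvert\widehat{R}_{\mathrm{pPC}}(f)-R_{\mathrm{pPC}}(f)\rvert \le \sup_{f}\lvert\widehat{R}^{+}(f)-R^{+}(f)\rvert + \sup_{f}\lvert\widehat{R}^{-}(f)-R^{-}(f)\rvert,
\]
where $R^{+},R^{-}$ (with empirical counterparts $\widehat{R}^{+},\widehat{R}^{-}$) are the two expectations defining $R_{\mathrm{pPC}}$, and the constants $1/(1-\rho_+)$ and $1/(1-\rho_-)$ factor out of each part. The crucial observation, already implicit in Theorem~\ref{noise_estimator}, is that under the separability condition every $f$ orders all true positives above all true negatives, so keeping the top $(1-\phi_+)$-fraction of $\widetilde{\mathcal{P}}$ selects precisely the true-positive instances inside $\widetilde{\mathcal{P}}$ (symmetrically for $\widetilde{\mathcal{N}}$). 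Hence $\mathbb{I}[\boldsymbol{x}\in\widetilde{\mathcal{P}}_{\mathrm{sel}}]$ may be replaced by the label-based indicator $\mathbb{I}[y=+1]$, which depends neither on $f$ nor on the ranking of the remaining points; this decoupling is what makes the rest routine.

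With the indicators turned into fixed domain restrictions, each part is handled by the two-step recipe already used for Theorem~\ref{estimation_error}: (i) McDiarmid's bounded-difference inequality applied to $\sup_f(\cdot)$, where changing one coordinate moves the normalized average by $\mathcal{O}(C_\ell/n)$ (using $0\le\ell\le C_\ell$ and that the indicator is at most $1$), yielding a one-sided deviation $C_\ell\sqrt{\log(4/\delta)/(2n)}$; and (ii) symmetrization followed by Talagrand's contraction lemma on the $\rho$-Lipschitz loss, giving $2\rho\widetilde{\mathfrak{R}}_n^{+}(\mathcal{F})$ for the positive part and $2\rho\widetilde{\mathfrak{R}}_n^{-}(\mathcal{F})$ for the negative part. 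Allocating $\delta/4$ to each of the four one-sided sup-deviations (two parts, two signs) and taking a union bound yields the $\log(4/\delta)$ inside the square roots; reinstating the prefactors $2/(1-\rho_+)$ and $2/(1-\rho_-)$ then gives exactly the stated inequality.

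I expect the main obstacle to be making the indicator replacement fully rigorous rather than merely structural. The selection fixes the cardinality at $(1-\phi_+)\lvert\widetilde{\mathcal{P}}\rvert$, whereas the true number of positives in $\widetilde{\mathcal{P}}$ is random, so the selected set and the true-positive set agree only up to an $\mathcal{O}_p(\sqrt{n})$ discrepancy that contributes $\mathcal{O}_p(1/\sqrt{n})$ after normalization---matching the order of the bound. One must also verify that the selection does not inflate McDiarmid's bounded-difference constant (changing one point perturbs the ranking by at most a single swap, so only $\mathcal{O}(1)$ summands change, keeping the constant $\mathcal{O}(C_\ell/n)$), and that contraction still applies once the indicator is regarded as an $f$-independent restriction rather than a function of $f$---precisely the point where the separability condition is indispensable. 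Everything else is the textbook Rademacher-complexity estimation-error argument.
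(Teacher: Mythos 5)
Your proposal is correct and follows essentially the same route as the paper's proof: the same risk decomposition yielding the factor $2$ with the coefficients $\tfrac{1}{1-\rho_+}$ and $\tfrac{1}{1-\rho_-}$ factored out of each sup-deviation, per-part uniform deviation bounds obtained via McDiarmid plus symmetrization and Talagrand's contraction (the paper packages this as its Lemma on uniform deviations and then omits the per-part proofs as analogous to those for Theorem~\ref{estimation_error}), and a union bound producing the $\log\frac{4}{\delta}$ terms. The one point where you go beyond the paper is the indicator issue: the paper silently swaps the $f$-dependent selection indicator $\mathbb{I}[\boldsymbol{x}\in\widetilde{\mathcal{P}}_{\mathrm{sel}}]$ for the label-based indicator (its appendix notation $\mathrm{P}\widetilde{\mathrm{P}}$) without comment, whereas you explicitly flag the cardinality mismatch and the bounded-difference constant as the steps needing justification under separability, so your sketch is, if anything, the more careful of the two.
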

The proof is provided in Appendix G. Theorem \ref{second_estimation_error} shows that the above method is consistent and this estimation error bound also attains the optimal convergence rate without any additional assumptions \cite{mendelson2008lower}.

\noindent\textbf{Regularization.}\quad
For the above RankPruning method, its URE is based on the assumption that the learning model could satisfy the separability condition. Thus, its performance heavily depends on the accuracy of the learning model. However, as the learning model is progressively updated, some of the selected confident examples may still contain label noise during the training process. As a result, the RankPruning method 
would be affected by incorrectly selected data. 
A straightforward improvement could be to improve the output quality of the learning model.
Motivated by Mean Teacher used in semi-supervised learning \cite{tarvainen2017mean},
we also resort to a teacher model that is an exponential moving average of model snapshots, i.e., $\boldsymbol{\Theta}_t^\prime = \alpha\boldsymbol{\Theta}_{t-1}^\prime + (1-\alpha)\boldsymbol{\Theta}_t$, where $\boldsymbol{\Theta}^\prime$ denotes the parameters of the teacher model, $\boldsymbol{\Theta}$ denotes the parameters of the learning model, the subscript $t$ denotes the training step, and $\alpha$ is a smoothing coefficient hyper-parameter. Such a teacher model could guide the learning model to produce high-quality outputs. To learn from the teacher model, we leverage consistency regularization $\Omega(f)=\mathbb{E}_{\boldsymbol{x}}\big[\|f_{\boldsymbol{\Theta}}(\boldsymbol{x}) - f_{\boldsymbol{\Theta}^\prime}(\boldsymbol{x})\|^2\big]$ \cite{laine2016temporal,tarvainen2017mean} to make the learning model consistent with the teacher model for improving the performance of the RankPruning method.

\section{Experiments}
In this section, we conduct extensive experiments to evaluate the performance of our proposed Pcomp classification methods on various datasets using different models.

\noindent\textbf{Datasets.}\quad
We use four popular benchmark datasets, including MNIST~\cite{lecun1998gradient}, Fashion-MNIST~\cite{xiao2017fashion}, Kuzushiji-MNIST~\cite{clanuwat2018deep}, and CIFAR-10~\cite{krizhevsky2009learning}. We train a multilayer perceptron (MLP) model with three hidden layers of width 300 and ReLU activation functions \cite{nair2010rectified} and batch normalization \cite{ioffe2015batch} on the first three datasets. We train a ResNet-34 model \cite{he2016deep} on the CIFAR-10 dataset. We also use USPS and three datasets from the UCI machine learning repository \cite{blake1998uci} including Pendigits, Optdigits, and CNAE-9. We train a linear model on these datasets, since they are not large-scale datasets. 
The brief descriptions of all used datasets with the corresponding models are reported in Table \ref{datasets}.
Since these datasets are specially used for multi-class classification, we manually transformed them into binary classification datasets (see Appendix H). As we have shown in Theorem \ref{relation}, the pairwise comparison examples can be equivalently transformed into pointwise examples, which are more convenient to generate. Therefore, we generate pointwise examples in experiments. Specifically, as Theorem \ref{noise_rates} discloses the noise rates in our defined data generation process, we simply generate pointwise corrupted examples according to the derived noise rates.

\noindent\textbf{Methods.}\quad 
For our proposed Pcomp classification problem, we propose the following methods:
\begin{itemize}[leftmargin=0.5cm]
\vspace{-10pt}
\item \textbf{Pcomp-Unbiased}, which denotes the proposed method that minimizes $\widehat{R}_{\mathrm{PC}}(f)$ in Eq.~(\ref{pc_empirical_estimator}).
\vspace{-5pt}
\item \textbf{Pcomp-ReLU}, which denotes the proposed method that minimizes $\widehat{R}_{\mathrm{cPC}}(f)$ in Eq.~(\ref{corrected_empirical_risk}) using the ReLU function as the risk correction function.
\vspace{-5pt}
\item \textbf{Pcomp-ABS}, which denotes the proposed method that minimizes $\widehat{R}_{\mathrm{cPC}}(f)$ in Eq.~(\ref{corrected_empirical_risk}) using the absolute value function as the risk correction function.
\vspace{-5pt}
\item \textbf{Pcomp-Teacher}, which improves the RankPruning method by imposing consistency regularization to make the learning model consistent with a teacher model.
\vspace{-10pt}
\end{itemize}
Besides, we compare with the following baselines:
\begin{itemize}[leftmargin=0.5cm]
\vspace{-10pt}
\item \textbf{Binary-Biased}, which conducts binary classification by regarding the data from $\widetilde{p}_+(\boldsymbol{x})$ as positive data and the data from $\widetilde{p}_-(\boldsymbol{x})$ as negative data. This is a straightforward method to handle the Pcomp classification problem. In our setting, Binary-Biased reduces to the BER minimization method \cite{menon2015learning}.
\vspace{-5pt}
\item \textbf{Noisy-Unbiased}, which denotes the noisy-label learning method that minimizes the empirical approximation of the URE proposed by \cite{natarajan2013learning}.
\vspace{-5pt}
\item \textbf{RankPruning}, which denotes the noisy-label learning method \cite{northcutt2017learning} that minimizes the empirical risk $\widehat{R}_{\mathrm{pPC}}(f)$ in Eq.~(\ref{rankpruning}).
\vspace{-10pt}
\end{itemize}
For all learning methods, we take the logistic loss as the binary loss function $\ell$ (i.e., $\ell(z)=\ln(1+\exp(-z))$), for fair comparisons. The hyper-parameter settings of all learning methods are aslo reported in Appendix H.
We implement our methods using PyTorch \cite{paszke2019pytorch} and use the
Adam \cite{kingma2015adam} optimizer with mini-batch size set to 256 and the number of training epochs set to 200 for the four large-scale datasets and 100 for other four datasets. All the experiments are conducted on GeForce GTX 1080 Ti GPUs. 

\begin{table}[!t]
\centering
\caption{Specification of the used benchmark datasets and models. These datasets are specially processed (see Appendix H) for performing Pcomp classification.}
\label{datasets}
\resizebox{0.485\textwidth}{!}{
\setlength{\tabcolsep}{1.0mm}{
				\begin{tabular}{lccccc}
					\toprule
					Dataset & \# Train & \# Test & \# Features & \# Classes & Model \\
					\midrule
					MNIST & 60,000 & 10,000 & 784 & 10 & MLP \\
					Fashion & 60,000 & 10,000 & 784 & 10 & MLP \\
					Kuzushiji & 60,000 & 10,000 & 784 & 10 & MLP \\
					CIFAR-10 & 50,000 & 10,000 & 3,072 & 10 & ResNet-34 \\
					\midrule
					USPS & 7,437 & 1,861 & 256 & 10 & Linear \\
					Pendigits & 8,793 & 2,199 & 16 & 10 & Linear \\
					Optdigits & 4,495 & 1,125 & 62 & 10 & Linear \\
					CNAE-9 & 864 & 216 & 856 & 9 & Linear  \\
					\bottomrule
				\end{tabular}
				}
				}
\vspace{-10pt}				
\end{table}

\begin{table*}[!t]
\centering
\caption{Classification accuracy (mean$\pm$std) of each method on the four benchmark datasets with different class priors. The best performance is highlighted in bold.}
\label{MLP_ResNet}
\resizebox{1.00\textwidth}{!}{
\setlength{\tabcolsep}{6.5mm}{
\begin{tabular}{c|c|c|c|c|c}
\toprule
Class Prior       & Methods & MNIST & Kuzushiji & Fashion & CIFAR-10 \\
\midrule
\multirow{7}{*}{$\pi_+=0.2$}	
    &Noisy-Unbiased	&0.806$\pm$0.026 	&0.712$\pm$0.014 	&0.865$\pm$0.050 	&0.665$\pm$0.119 	\\
	&Binary-Biased	&0.282$\pm$0.018 	&0.584$\pm$0.021 	&0.371$\pm$0.067 	&0.499$\pm$0.170 	\\
	&RankPruning	&0.933$\pm$0.005 	&0.810$\pm$0.006 	&0.938$\pm$0.005 	&0.840$\pm$0.005 	\\
	\cmidrule(r){2-6}
    &Pcomp-ABS	&0.893$\pm$0.013 	&\textbf{0.849$\pm$0.007} 	&0.898$\pm$0.008 	&0.833$\pm$0.008 	\\
	&Pcomp-ReLU	&0.927$\pm$0.010 	&0.838$\pm$0.014 	&0.927$\pm$0.022 	&0.812$\pm$0.007 	\\
	&Pcomp-Unbiased	&0.782$\pm$0.025 	&0.693$\pm$0.041 	&0.835$\pm$0.038 	&0.584$\pm$0.011 	\\
	&Pcomp-Teacher	&\textbf{0.958$\pm$0.007} 	&0.835$\pm$0.015 	&\textbf{0.954$\pm$0.004} 	&\textbf{0.841$\pm$0.006} 	\\
\midrule
\multirow{7}{*}{$\pi_+=0.5$}	&Noisy-Unbiased	&0.892$\pm$0.013 	&0.656$\pm$0.096 	&0.908$\pm$0.031 	&0.642$\pm$0.031 	\\
	&Binary-Biased	&0.537$\pm$0.026 	&0.615$\pm$0.008 	&0.457$\pm$0.032 	&0.470$\pm$0.039 	\\
	&RankPruning	&0.888$\pm$0.004 	&0.782$\pm$0.008 	&0.917$\pm$0.005 	&0.793$\pm$0.015 	\\
	\cmidrule(r){2-6}
	&Pcomp-ABS	&0.832$\pm$0.010 	&0.732$\pm$0.007 	&0.899$\pm$0.006 	&0.712$\pm$0.023 	\\
	&Pcomp-ReLU	&0.875$\pm$0.009 	&0.729$\pm$0.019 	&0.918$\pm$0.016 	&0.746$\pm$0.030 	\\
	&Pcomp-Unbiased	&0.877$\pm$0.012 	&0.690$\pm$0.084 	&0.906$\pm$0.037 	&0.631$\pm$0.031 	\\
	&Pcomp-Teacher	&\textbf{0.942$\pm$0.004} 	&\textbf{0.791$\pm$0.013} 	&\textbf{0.957$\pm$0.002} 	&\textbf{0.795$\pm$0.012} 	\\
\midrule
\multirow{7}{*}{$\pi_+=0.8$}	&Noisy-Unbiased	&0.800$\pm$0.036 	&0.749$\pm$0.038 	&0.815$\pm$0.058 	&0.591$\pm$0.019 	\\
	&Binary-Biased	&0.299$\pm$0.066 	&0.558$\pm$0.005 	&0.444$\pm$0.020 	&0.371$\pm$0.113 	\\
	&RankPruning	&0.939$\pm$0.004 	&0.830$\pm$0.012 	&0.939$\pm$0.004 	&0.843$\pm$0.010 	\\
	\cmidrule(r){2-6}
	&Pcomp-ABS	&0.815$\pm$0.009 	&0.825$\pm$0.011 	&0.879$\pm$0.014 	&0.827$\pm$0.013 	\\
	&Pcomp-ReLU	&0.916$\pm$0.013 	&0.827$\pm$0.011 	&0.925$\pm$0.015 	&0.811$\pm$0.007 	\\
	&Pcomp-Unbiased	&0.793$\pm$0.016 	&0.721$\pm$0.037 	&0.823$\pm$0.032 	&0.569$\pm$0.007 	\\
	&Pcomp-Teacher	&\textbf{0.958$\pm$0.007} 	&\textbf{0.836$\pm$0.015} 	&\textbf{0.955$\pm$0.004} 	&\textbf{0.844$\pm$0.014} 	\\
\bottomrule
\end{tabular}
}
}
\end{table*}

\noindent\textbf{Experimental Setup.}\quad 
We evaluate the performance of all learning methods under different class prior settings, i.e., $\pi_+$ is selected from $\{0.2, 0.5, 0.8\}$. It is noteworthy that we could empirically estimate $\pi_+$ if we exactly follow our defined data generation process to generate Pcomp examples. Specifically, we could first empirically estimate $\widetilde{\pi}=1-\pi_+\pi_-$ by the fraction of Pcomp examples from the three cases $\{(+1, -1), (+1, +1), (-1, -1)\}$ in all the sampled data pairs. Then $\pi_+$ can be estimated by $\pi_+=1/2-\sqrt{\widetilde{\pi}-3/4}$ (if $\pi_+<\pi_-$) or $\pi_+=1/2+\sqrt{\widetilde{\pi}-3/4}$ (if $\pi_+\geq \pi_-$). Therefore, if we know whether $\pi_+$ is larger than $\pi_-$, we could exactly estimate the true class prior $\pi_+$. For simplicity, we assume that the class prior $\pi_+$ is known for all the methods. We repeat the sampling-and-training process 5 times for all learning methods on all datasets and record the mean accuracy with standard deviation (mean$\pm$std).

\noindent\textbf{Experimental Results with Complex Models.}\quad Table \ref{MLP_ResNet} records the classification accuracy of each method on the four benchmark datasets with different class priors. From Table \ref{MLP_ResNet}, we have the following observations: 
\begin{itemize}[leftmargin=0.5cm]
\vspace{-10pt}
\item Binary-Biased always achieves the worst performance, which means that our Pcomp classification problem cannot be simply solved by binary classification.
\vspace{-5pt}
\item Pcomp-Unbiased is inferior to Pcomp-ABS and Pcomp-ReLU. This observation accords with what we have discussed, i.e., directly minimizing $\widehat{R}_{\mathrm{PC}}(f)$ would suffer from overfitting when complex models are used, because there are negative terms included in $\widehat{R}_{\mathrm{PC}}(f)$ and the empirical risk can be negative during the training process. In contrast, Pcomp-ReLU and Pcomp-ABS employ consistent correction functions on $\widehat{R}_{\mathrm{PC}}(f)$ so that the empirical risk will never be negative. Therefore, when complex models such as deep neural networks are used, Pcomp-ReLU and Pcomp-ABS are expected to outperform Pcomp-Unbiased.
\vspace{-5pt}
\item Pcomp-Teacher achieves the best performance in most cases. This observation verifies the effectiveness of the imposed consistency regularization, which makes the learning model consistent with a teacher model, to improve the quality of selected confident examples by the RankPruning method.
\vspace{-5pt}
\item It is worth noting that the standard deviations of Binary-Biased, Pcomp-Unbiased, and Noisy-Unbiased are sometimes higher than other methods. This is because the three methods suffer from overfitting when complex models are used, and the performance could be quite unstable in different trials.
\vspace{-5pt}
\end{itemize}
\begin{table*}[!t]
\centering
\caption{Classification accuracy (mean$\pm$std) of each method on the four UCI datasets with different class priors. The best performance is highlighted in bold.}
\label{linear}
\resizebox{1.00\textwidth}{!}{
\setlength{\tabcolsep}{6.5mm}{
\begin{tabular}{c|c|c|c|c|c}
\toprule
Class Prior       & Methods & USPS & Pendigits & Optdigits & CNAE-9 \\
\midrule
\multirow{7}{*}{$\pi_+=0.2$}   &Noisy-Unbiased	&0.921$\pm$0.010 	&0.857$\pm$0.012 	&0.885$\pm$0.015 	&0.830$\pm$0.019 	\\
	&Binary-Biased	&0.752$\pm$0.016 	&0.639$\pm$0.050 	&0.705$\pm$0.026 	&0.629$\pm$0.040 	\\
	&RankPruning	&0.931$\pm$0.007 	&0.780$\pm$0.061 	&0.878$\pm$0.011 	&0.780$\pm$0.046 	\\
                  \cmidrule(r){2-6}
    &Pcomp-ABS	&0.909$\pm$0.005 	&0.863$\pm$0.013 	&0.871$\pm$0.013 	&0.819$\pm$0.018 	\\
	&Pcomp-ReLU	&0.917$\pm$0.006 	&\textbf{0.868$\pm$0.012} 	&0.875$\pm$0.011 	&\textbf{0.835$\pm$0.011} 	\\
	&Pcomp-Unbiased	&0.922$\pm$0.007 	&0.867$\pm$0.015 	&0.881$\pm$0.018 	&0.806$\pm$0.023 	\\
	&Pcomp-Teacher	&\textbf{0.927$\pm$0.009} 	&0.847$\pm$0.035 	&\textbf{0.886$\pm$0.016} 	&0.687$\pm$0.052 	\\

\midrule
\multirow{7}{*}{$\pi_+=0.5$}  &Noisy-Unbiased	&0.911$\pm$0.007 	&0.826$\pm$0.014 	&0.838$\pm$0.011 	&0.766$\pm$0.039 	\\
	&Binary-Biased	&0.913$\pm$0.007 	&0.761$\pm$0.049 	&0.834$\pm$0.013 	&0.773$\pm$0.051 	\\
	&RankPruning	&0.925$\pm$0.008 	&0.840$\pm$0.019 	&0.864$\pm$0.027 	&0.672$\pm$0.068 	\\	
                  \cmidrule(r){2-6}
	&Pcomp-ABS	&0.912$\pm$0.008 	&0.839$\pm$0.010 	&0.841$\pm$0.013 	&0.749$\pm$0.047 	\\
	&Pcomp-ReLU	&0.912$\pm$0.007 	&0.844$\pm$0.008 	&0.846$\pm$0.013 	&0.766$\pm$0.038 	\\
	&Pcomp-Unbiased	&0.911$\pm$0.006 	&\textbf{0.846$\pm$0.009} 	&0.847$\pm$0.014 	&0.763$\pm$0.040 	\\
	&Pcomp-Teacher	&\textbf{0.926$\pm$0.008} 	&0.840$\pm$0.019 	&\textbf{0.869$\pm$0.021} 	&\textbf{0.787$\pm$0.047} 	\\
\midrule
\multirow{7}{*}{$\pi_+=0.8$}	&Noisy-Unbiased	&0.918$\pm$0.017 	&0.890$\pm$0.013 	&0.884$\pm$0.006 	&0.838$\pm$0.017 	\\
	&Binary-Biased	&0.731$\pm$0.017 	&0.634$\pm$0.042 	&0.554$\pm$0.106 	&0.621$\pm$0.037 	\\
	&RankPruning	&0.934$\pm$0.010 	&0.856$\pm$0.018 	&0.863$\pm$0.025 	&0.737$\pm$0.050 	\\
                  \cmidrule(r){2-6}
	&Pcomp-ABS	&0.900$\pm$0.017 	&0.889$\pm$0.014 	&0.882$\pm$0.002 	&0.824$\pm$0.013 	\\
	&Pcomp-ReLU	&0.921$\pm$0.018 	&\textbf{0.894$\pm$0.012} 	&\textbf{0.888$\pm$0.004} 	&0.839$\pm$0.015 	\\
	&Pcomp-Unbiased	&0.899$\pm$0.014 	&0.883$\pm$0.006 	&0.874$\pm$0.008 	&\textbf{0.840$\pm$0.013} 	\\
	&Pcomp-Teacher	&\textbf{0.937$\pm$0.008} 	&0.880$\pm$0.011 	&0.884$\pm$0.010 	&0.781$\pm$0.033 	\\
\bottomrule
\end{tabular}
}
}
\vspace{-5pt}
\end{table*}

\begin{figure*}[!t]
\centering
\subfigure{\includegraphics[width=1.6in]{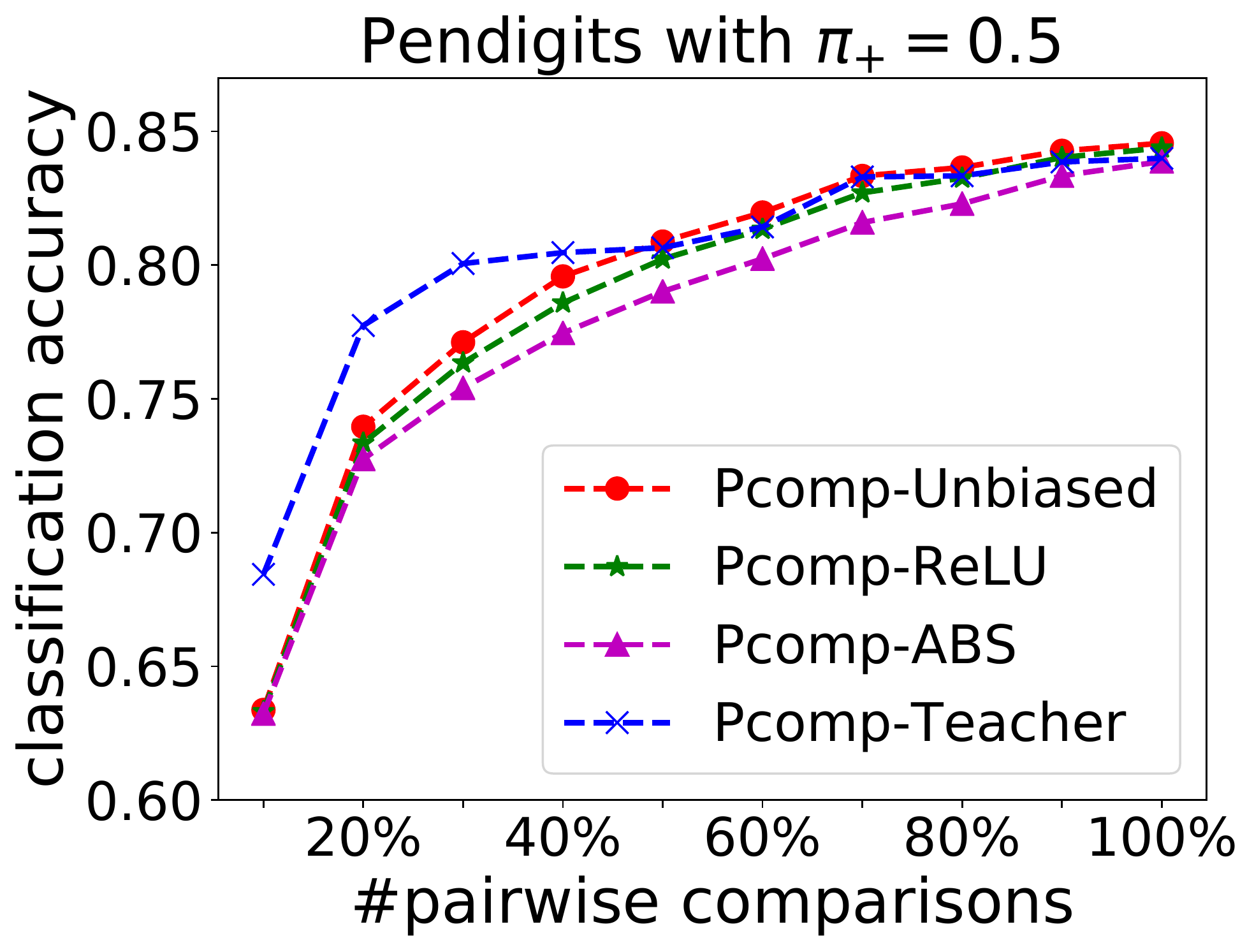}}
\subfigure{\includegraphics[width=1.6in]{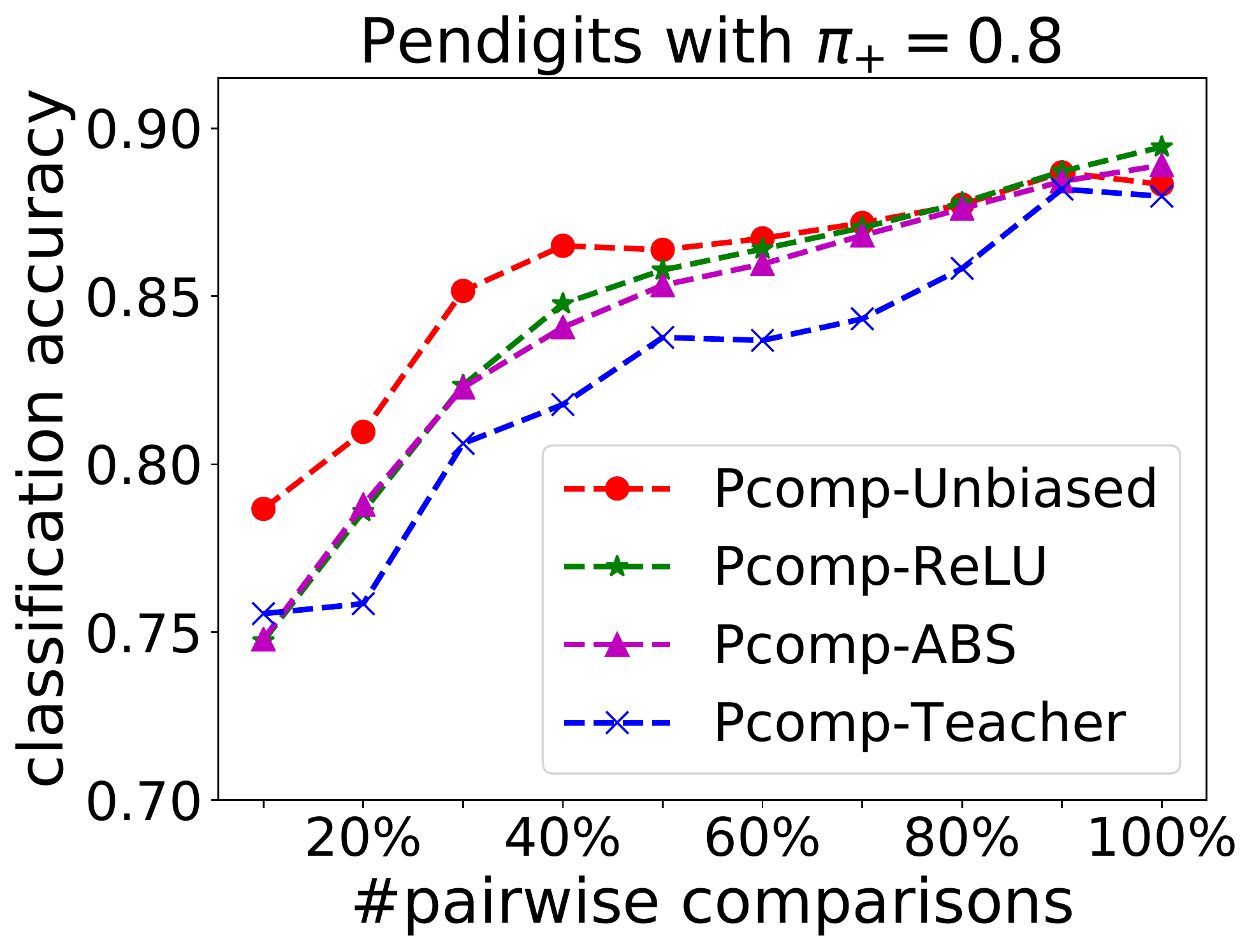}}
\subfigure{\includegraphics[width=1.6in]{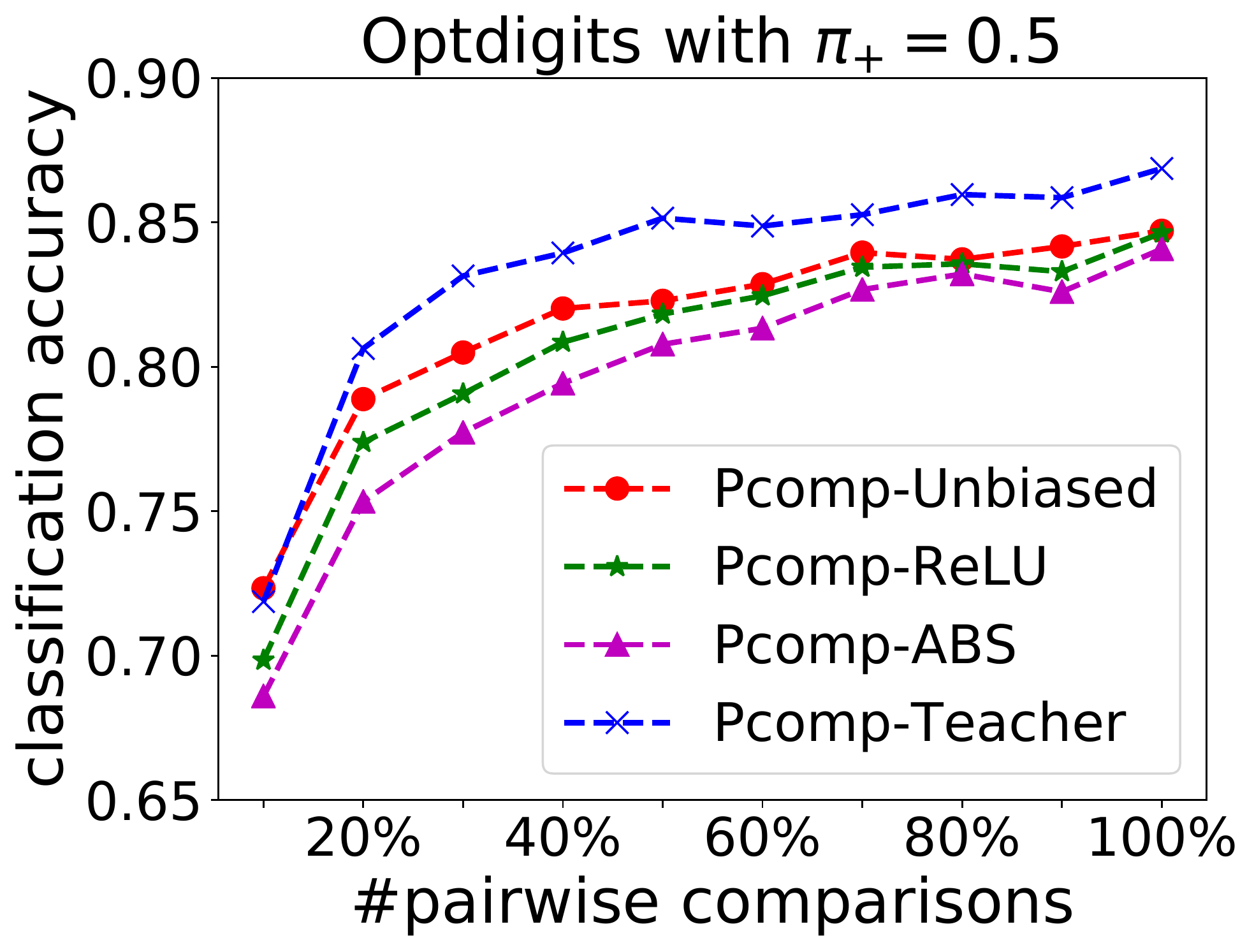}}
\subfigure{\includegraphics[width=1.6in]{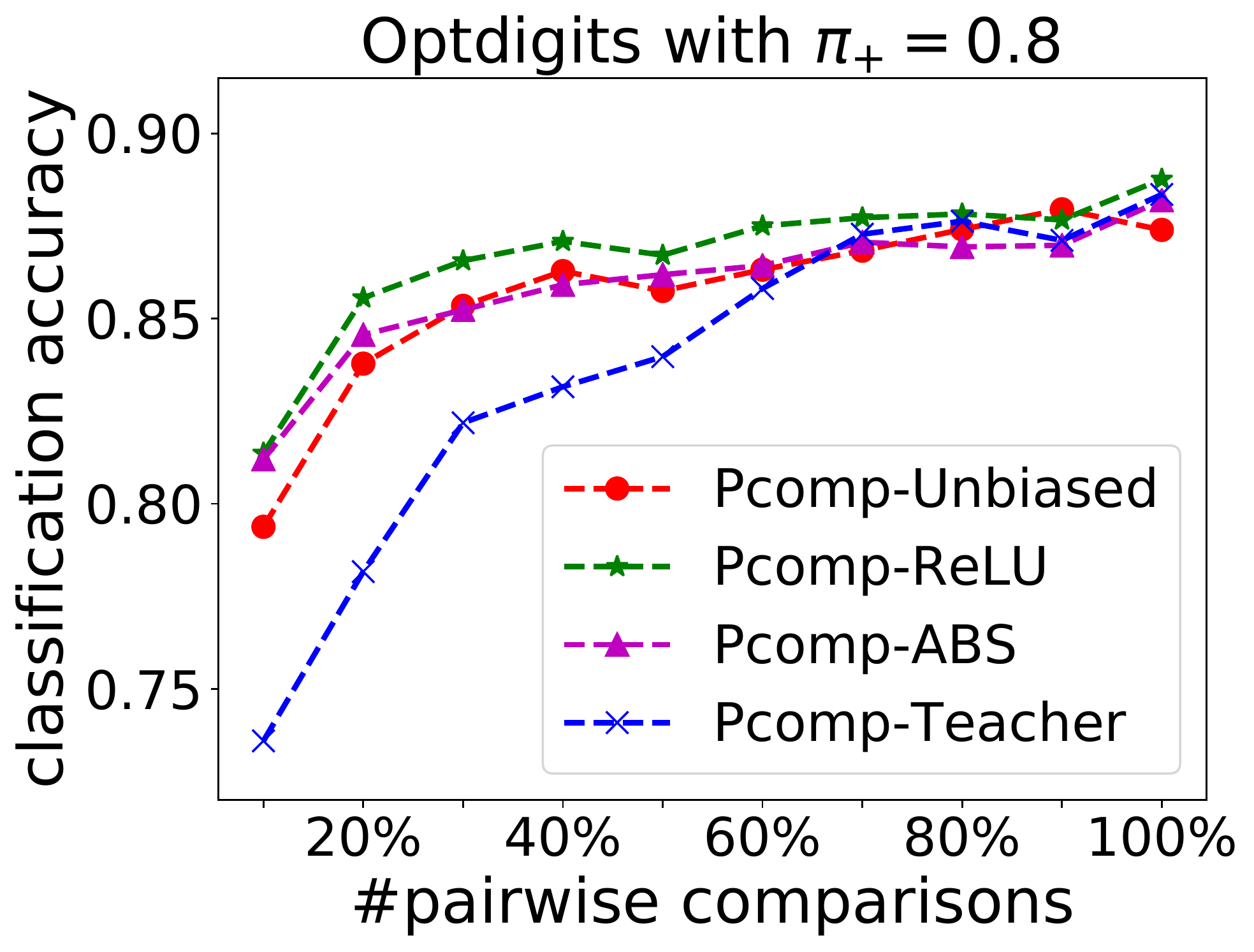}}
\vspace{-8pt}
\caption{The classification accuracy of our proposed Pcomp classification methods when the number of pairwise comparisons increases.}
\label{performance_increasing}
\vspace{-5pt}
\end{figure*}

\noindent\textbf{Experimental Results with Simple Models.}\quad Table \ref{linear} reports the classification accuracy of each method on the four UCI datasets with different class priors. From Table \ref{linear}, we have the following observations: 
\begin{itemize}[leftmargin=0.5cm]
\vspace{-10pt}
\item Binary-Biased achieves the worst performance in nearly all cases, which also implies that we need to develop other novel methods for Pcomp classification.
\vspace{-5pt}
\item Pcomp-Unbiased has comparable performance to its variants Pcomp-ABS and Pcomp-ReLU, because Pcomp-Unbiased does not suffer from overfitting when the linear model is used, and it is not necessary to use consistent correction functions anymore. 
\vspace{-5pt}
\item Pcomp-Teacher is still better than RankPruning, while it is sometimes inferior to other Pcomp classification methods. This is because the linear model is not as powerful as neural networks, thus the selected confident examples may not be so reliable.
\vspace{-5pt}
\end{itemize}

\noindent\textbf{Performance of Increasing Pairwise Comparisons.}\quad As shown by Theorem \ref{estimation_error} and Theorem \ref{second_estimation_error}, the performance of our Pcomp classification methods is expected to be improved if more pairwise comparisons are given.
To empirically validate such theoretical findings, we further conduct experiments on Pendigits and Optdigits with class prior $\pi_+=0.5$ and $\pi_+=0.8$ by changing the fraction of pairwise comparisons (100\% means that we use all the generated pairwise comparisons in the training process).
As shown in Figure \ref{performance_increasing}, the classification accuracy of our 
methods generally increases given more pairwise comparisons. This observation is clearly in accordance with our derived estimation error bounds, because the estimation error would decrease as the number of pairwise comparisons increases.

\section{Conclusion and Future Work}
In this paper, we proposed a novel weakly supervised learning setting called \emph{pairwise comparison (Pcomp) classification}, where we aim to train a binary classifier from only \emph{pairwise comparison data}, i.e., two examples that we know one is more likely to be positive than the other, instead of pointwise labeled data. Pcomp classification is useful for private classification tasks where we are not allowed to directly access labels and subjective classification tasks where labelers have different labeling standards. To solve the Pcomp classification problem, we presented a mathematical formulation for the generation process of pairwise comparison data, based on which we explored two \emph{unbiased risk estimators} (UREs) to train a binary classifier by \emph{empirical risk minimization} and established the corresponding \emph{estimation error bounds}.
We first proved that a URE can be derived and improved it using correction functions.
Then, we started from the \emph{noisy-label learning} perspective to introduce a \emph{progressive} URE and improved it by imposing \emph{consistency regularization}.
Finally, experiments demonstrated the effectiveness of our proposed methods.

In future work, we will apply Pcomp classification to solve some challenging real-world problems like binary classification with class overlapping. 
In addition, we could also extend Pcomp classification to the multi-class classification setting by using the one-versus-all strategy. Suppose there are multiple classes, we are given pairs of unlabeled data that we know which one is more likely to belong to a specific class. Then, we can use the proposed methods in this paper to train a binary classifier for each class. Finally, by comparing the outputs of these binary classifiers, the predicted class can be determined. 
\section*{Acknowledgements}
This research was supported by the National Research Foundation, Singapore under its AI Singapore Programme (AISG Award No: AISG-RP-2019-0013), National Satellite of Excellence in Trustworthy Software Systems (Award No: NSOE-TSS2019-01), and NTU.
NL was supported by MEXT scholarship No.\ 171536 and the MSRA D-CORE Program. BH was supported by the RGC Early Career Scheme No. 22200720, NSFC Young Scientists Fund No. 62006202, and HKBU CSD Departmental Incentive Grant.
GN and MS were supported by JST AIP Acceleration Research Grant Number JPMJCR20U3, Japan. MS was also supported by the Institute for AI and Beyond, UTokyo.
\bibliographystyle{icml2021}
\bibliography{example_paper}

\newpage
\onecolumn
\appendix
\section{Proof of Theorem \ref{data_generation}}\label{proof_of_theorem1}
Let us define $\widetilde{\mathcal{Y}}=\{(+1,+1),(+1,-1),(-1,-1)\}$. It is clear that each pair of examples $(\boldsymbol{x},\boldsymbol{x}^\prime)$ is independently drawn from the following data distribution:
\begin{align}
\nonumber
\widetilde{p}(\boldsymbol{x},\boldsymbol{x}^\prime) &=p((\boldsymbol{x},\boldsymbol{x}^\prime)\mid (y,y^\prime)\in\widetilde{\mathcal{Y}})= \frac{p((\boldsymbol{x},\boldsymbol{x}^\prime), (y,y^\prime)\in\widetilde{\mathcal{Y}})}{p((y,y^\prime)\in\widetilde{\mathcal{Y}})},
\end{align}
where $p((y,y^\prime)\in\widetilde{\mathcal{Y}}) = \pi_+^2 + \pi_-^2 + \pi_+\pi_-$ and
\begin{align}
\nonumber
p(\boldsymbol{x},\boldsymbol{x}^\prime, (y,y^\prime)\in\widetilde{\mathcal{Y}})&= \sum\nolimits_{(y,y^\prime)\in\widetilde{\mathcal{Y}}}p(\boldsymbol{x},\boldsymbol{x}^\prime\mid (y,y^\prime))\cdot p(y,y^\prime)\\
\nonumber
&= \pi_+^2p_+(\boldsymbol{x})p_+(\boldsymbol{x}^\prime) + \pi_-^2p_-(\boldsymbol{x})p_-(\boldsymbol{x}^\prime) + \pi_+\pi_-p_+(\boldsymbol{x})p_-(\boldsymbol{x}^\prime).
\end{align}
Finally, let $\widetilde{p}(\boldsymbol{x},\boldsymbol{x}^\prime) =p((\boldsymbol{x},\boldsymbol{x}^\prime)\mid (y,y^\prime)\in\widetilde{\mathcal{Y}})$, the proof is completed.\qed
\section{Proof of Theorem \ref{relation}}\label{proof_of_theorem2}
In order to decompose the pairwise comparison data distribution into pointwise distribution, we marginalize $\widetilde{p}(\boldsymbol{x},\boldsymbol{x}^\prime)$ with respect to $\boldsymbol{x}$ or $\boldsymbol{x}^\prime$. Then we can obtain
\begin{align}
\nonumber
\int \widetilde{p}(\boldsymbol{x},\boldsymbol{x}^\prime)\mathrm{d}\boldsymbol{x}^\prime
=& \frac{1}{\widetilde{\pi}}\Big( \pi_+^2p_+(\boldsymbol{x})+\pi_-^2p_-(\boldsymbol{x})+\pi_+\pi_-p_+(\boldsymbol{x})\Big)\\
\nonumber
=& \frac{\pi_+}{\pi_-^2+\pi_+}p_+(\boldsymbol{x}) + \frac{\pi_-^2}{\pi_-^2+\pi_+}p_-(\boldsymbol{x})\\
\nonumber
=& \widetilde{p}_+(\boldsymbol{x}),
\end{align}
and
\begin{align}
\nonumber
\int \widetilde{p}(\boldsymbol{x},\boldsymbol{x}^\prime)\mathrm{d}\boldsymbol{x}
=& \frac{1}{\widetilde{\pi}}\big( \pi_+^2p_+(\boldsymbol{x}^\prime)+\pi_-^2p_-(\boldsymbol{x}^\prime)+\pi_+\pi_-p_-(\boldsymbol{x}^\prime)\big)\\
\nonumber
=& \frac{\pi_+^2}{\pi_+^2+\pi_-}p_+(\boldsymbol{x}^\prime) + \frac{\pi_-}{\pi_+^2+\pi_-}p_-(\boldsymbol{x}^\prime)\\
\nonumber
=& \widetilde{p}_-(\boldsymbol{x}^\prime),
\end{align}
which concludes the proof of Theorem \ref{relation}.\qed
\section{Proof of Lemma \ref{another_relation}}\label{proof_of_lemma1}
Based on Theorem \ref{relation}, we can obtain the following linear equation:
\begin{gather}
\nonumber
\begin{bmatrix}
\widetilde{p}_+(\boldsymbol{x})\\
\widetilde{p}_-(\boldsymbol{x})
\end{bmatrix} = \frac{1}{\widetilde{\pi}}\begin{bmatrix}
\pi_+ & \pi_-^2\\
\pi_+^2 & \pi_-
\end{bmatrix}
\begin{bmatrix}
p_+(\boldsymbol{x})\\
p_-(\boldsymbol{x})
\end{bmatrix}.
\end{gather}
By solving the above equation, we obtain
\begin{align}
\nonumber
p_+(\boldsymbol{x}) &= \frac{1}{\pi_+-\pi_-\pi_+^2}\big(\widetilde{\pi}\cdot\widetilde{p}_+(\boldsymbol{x}) - \pi_-\widetilde{\pi}\cdot\widetilde{p}_-(\boldsymbol{x})\big)= \frac{1}{\pi_+}\big(\widetilde{p}_+(\boldsymbol{x}) - \pi_-\widetilde{p}_-(\boldsymbol{x})\big),\\
\nonumber
p_-(\boldsymbol{x}) &= \frac{1}{\pi_- - \pi_+\pi_-^2}\big(\widetilde{\pi}\cdot\widetilde{p}_-(\boldsymbol{x}) - \pi_+\widetilde{\pi}\cdot\widetilde{p}_+(\boldsymbol{x})\big)= \frac{1}{\pi_-}\big(\widetilde{p}_-(\boldsymbol{x}) - \pi_+\widetilde{p}_+(\boldsymbol{x})\big),
\end{align}
which concludes the proof of Lemma \ref{another_relation}.\qed
\section{Proof of Theorem \ref{pc_estimator}}\label{proof_of_theorem3}
It is quite intuitive to derive
\begin{align}
\nonumber
R(f)&= \mathbb{E}_{p(\boldsymbol{x},y)}\big[\ell(f(\boldsymbol{x}),y)\big]\\
\nonumber
&= \pi_+\mathbb{E}_{p_+(\boldsymbol{x})}\big[\ell(f(\boldsymbol{x}),+1)\big] + \pi_-\mathbb{E}_{p_-(\boldsymbol{x})}\big[\ell(f(\boldsymbol{x}),-1)\big]\\
\nonumber
&= \frac{\pi_+\widetilde{\pi}}{\pi_+-\pi_-\pi_+^2}\mathbb{E}_{\widetilde{p}_+(\boldsymbol{x})}\big[\ell(f(\boldsymbol{x}),+1)\big]
- \frac{\pi_+\pi_-\widetilde{\pi}}{\pi_+-\pi_-\pi_+^2}\mathbb{E}_{\widetilde{p}_-(\boldsymbol{x}^\prime)}\big[\ell(f(\boldsymbol{x}),+1)\big]\quad\quad  (\text{Lemma \ref{another_relation}})\\
\nonumber
&\quad\quad\quad\quad\quad\quad\quad+\frac{\pi_-\widetilde{\pi}}{\pi_--\pi_+\pi_-^2}\mathbb{E}_{\widetilde{p}_-(\boldsymbol{x}^\prime)}\big[\ell(f(\boldsymbol{x}),-1)\big]
- \frac{\pi_+\pi_-\widetilde{\pi}}{\pi_--\pi_+\pi_-^2}\mathbb{E}_{\widetilde{p}_+(\boldsymbol{x})}\big[\ell(f(\boldsymbol{x}),-1)\big]\\
\nonumber
&= \mathbb{E}_{\widetilde{p}_+(\boldsymbol{x})}\big[\ell(f(\boldsymbol{x}),+1)-\pi_+\ell(f(\boldsymbol{x}),-1)\big]
+ \mathbb{E}_{\widetilde{p}_-(\boldsymbol{x}^\prime)}\big[\ell(f(\boldsymbol{x}),-1)-\pi_-\ell(f(\boldsymbol{x}),+1)\big]\\
\nonumber
&= R_{\mathrm{PC}}(f),
\end{align}
which concludes the proof of Theorem \ref{pc_estimator}.\qed
\section{Proof of Theorem \ref{estimation_error}}\label{proof_of_theorem4}
First of all, we introduce the following notations:
\begin{align}
\nonumber
R^+_{\mathrm{PC}}(f) &= \mathbb{E}_{\widetilde{p}_+(\boldsymbol{x})}\big[\ell(f(\boldsymbol{x}),+1)-\pi_+\ell(f(\boldsymbol{x}),-1)\big],\\
\nonumber
\widehat{R}^+_{\mathrm{PC}}(f) &= \frac{1}{n}\sum_{i=1}^n\Big(\ell(f(\boldsymbol{x}_i),+1)-\pi_+\ell(f(\boldsymbol{x}_i),-1)\Big),\\
\nonumber
R^-_{\mathrm{PC}}(f) &= \mathbb{E}_{\widetilde{p}_-(\boldsymbol{x}^\prime)}\big[\ell(f(\boldsymbol{x}^\prime),-1)-\pi_-\ell(f(\boldsymbol{x}^\prime),+1)\big],\\
\nonumber
\widehat{R}^-_{\mathrm{PC}}(f) &= \frac{1}{n}\sum_{i=1}^n\Big(\ell(f(\boldsymbol{x}_i^\prime),-1)-\pi_-\ell(f(\boldsymbol{x}_i^\prime),+1)\Big).
\end{align}
In this way, we could simply represent $R_{\mathrm{PC}}(f)$ and $\widehat{R}_{\mathrm{PC}}(f)$ as
\begin{gather}
\nonumber
R_{\mathrm{PC}}(f) = R^+_{\mathrm{PC}}(f) + R^-_{\mathrm{PC}}(f),\quad
\widehat{R}_{\mathrm{PC}}(f) = \widehat{R}^+_{\mathrm{PC}}(f) + \widehat{R}^-_{\mathrm{PC}}(f).
\end{gather}
Then we have the following lemma.
\begin{lemma}
\label{inequality_lemma}
The following inequality holds:
\begin{gather}
\label{first_inequality}
R(\widehat{f}_{\mathrm{PC}}) - R(f^\star)\leq 2\sup_{f\in\mathcal{F}}\left|R_{\mathrm{PC}}^+(f) - \widehat{R}^+_{\mathrm{PC}}(f)\right| + 2\sup_{f\in\mathcal{F}}\left|R_{\mathrm{PC}}^-(f) - \widehat{R}_{\mathrm{PC}}^-(f)\right|.
\end{gather}
\end{lemma}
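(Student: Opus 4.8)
The plan is to run the standard uniform-deviation (estimation error) decomposition and then split the resulting single deviation term into its positive and negative pieces. First I would invoke Theorem~\ref{pc_estimator} to rewrite the left-hand side purely in terms of $R_{\mathrm{PC}}$: since $R$ and $R_{\mathrm{PC}}$ coincide, we have $R(\widehat{f}_{\mathrm{PC}}) - R(f^\star) = R_{\mathrm{PC}}(\widehat{f}_{\mathrm{PC}}) - R_{\mathrm{PC}}(f^\star)$. This is the point that makes everything go through: the quantity we can control by concentration, namely $\widehat{R}_{\mathrm{PC}}$, is the empirical version of exactly the risk that appears on the left-hand side.

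Next I would telescope by adding and subtracting the empirical risk evaluated at both classifiers:
\begin{align}
\nonumber
R_{\mathrm{PC}}(\widehat{f}_{\mathrm{PC}}) - R_{\mathrm{PC}}(f^\star)
&= \big(R_{\mathrm{PC}}(\widehat{f}_{\mathrm{PC}}) - \widehat{R}_{\mathrm{PC}}(\widehat{f}_{\mathrm{PC}})\big) \\
\nonumber
&\quad + \big(\widehat{R}_{\mathrm{PC}}(\widehat{f}_{\mathrm{PC}}) - \widehat{R}_{\mathrm{PC}}(f^\star)\big) \\
\nonumber
&\quad + \big(\widehat{R}_{\mathrm{PC}}(f^\star) - R_{\mathrm{PC}}(f^\star)\big).
\end{align}
The middle bracket is nonpositive, because $\widehat{f}_{\mathrm{PC}} = \argmin_{f\in\mathcal{F}}\widehat{R}_{\mathrm{PC}}(f)$ minimizes the empirical risk over $\mathcal{F}$, so it can be dropped. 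Each of the two remaining brackets is bounded in absolute value by the uniform deviation $\sup_{f\in\mathcal{F}}|R_{\mathrm{PC}}(f)-\widehat{R}_{\mathrm{PC}}(f)|$, which contributes the factor of $2$.

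Finally, using the definitions $R_{\mathrm{PC}}(f)=R^+_{\mathrm{PC}}(f)+R^-_{\mathrm{PC}}(f)$ and $\widehat{R}_{\mathrm{PC}}(f)=\widehat{R}^+_{\mathrm{PC}}(f)+\widehat{R}^-_{\mathrm{PC}}(f)$, together with the triangle inequality and the subadditivity of the supremum (the sup of a sum is at most the sum of the sups), I would split the single deviation into the two separate suprema over the $+$ and $-$ parts, which produces the stated bound. I do not expect a genuine obstacle here, as the lemma is a routine ERM decomposition; the only points requiring care are to perform the split into the $+$ and $-$ pieces \emph{after}, not before, invoking the optimality of $\widehat{f}_{\mathrm{PC}}$, and to note that the class priors $\pi_+,\pi_-$ are already absorbed into the definitions of $R^{\pm}_{\mathrm{PC}}$ and $\widehat{R}^{\pm}_{\mathrm{PC}}$, so they need no separate handling in this step (they will only reappear when these suprema are subsequently bounded by Rademacher complexities in the proof of Theorem~\ref{estimation_error}).
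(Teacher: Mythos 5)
Your proposal is correct and follows essentially the same argument as the paper's proof: the same add-and-subtract telescoping, dropping the middle term by ERM optimality of $\widehat{f}_{\mathrm{PC}}$, bounding the two remaining terms by the uniform deviation (invoking Theorem~\ref{pc_estimator} to identify $R$ with $R_{\mathrm{PC}}$), and finally splitting via the triangle inequality and subadditivity of the supremum. The only cosmetic difference is that you apply Theorem~\ref{pc_estimator} before telescoping, whereas the paper telescopes first and then substitutes $R_{\mathrm{PC}}$ for $R$; the substance is identical.
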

\begin{proof}
We could intuitively express $R(\widehat{f}_{\mathrm{PC}}) - R(f^\star)$ as
\begin{align}
\nonumber
R(\widehat{f}_{\mathrm{PC}}) - R(f^\star) &= R(\widehat{f}_{\mathrm{PC}}) - \widehat{R}_{\mathrm{PC}}(\widehat{f}_{\mathrm{PC}}) + \widehat{R}_{\mathrm{PC}}(\widehat{f}_{\mathrm{PC}}) - \widehat{R}_{\mathrm{PC}}(f^\star) + \widehat{R}_{\mathrm{PC}}(f^\star) - R(f^\star)\\
\nonumber
&= R_{\mathrm{PC}}(\widehat{f}_{\mathrm{PC}}) - \widehat{R}_{\mathrm{PC}}(\widehat{f}_{\mathrm{PC}}) + \widehat{R}_{\mathrm{PC}}(\widehat{f}_{\mathrm{PC}}) - \widehat{R}_{\mathrm{PC}}(f^\star) + \widehat{R}_{\mathrm{PC}}(f^\star) - R_{\mathrm{PC}}(f^\star)\\
\nonumber
&\leq \sup_{f\in\mathcal{F}}\left|R_{\mathrm{PC}}(f) - \widehat{R}_{\mathrm{PC}}(f)\right| + 0 + \sup_{f\in\mathcal{F}}\left|R_{\mathrm{PC}}(f) - \widehat{R}_{\mathrm{PC}}(f)\right|\\
\nonumber
&=2\sup_{f\in\mathcal{F}}\left|R_{\mathrm{PC}}(f) - \widehat{R}_{\mathrm{PC}}(f)\right|\\
\nonumber
&\leq 2\sup_{f\in\mathcal{F}}\left|R_{\mathrm{PC}}^+(f) - \widehat{R}^+_{\mathrm{PC}}(f)\right| + 2\sup_{f\in\mathcal{F}}\left|R_{\mathrm{PC}}^-(f) - \widehat{R}_{\mathrm{PC}}^-(f)\right|,
\end{align}
where the second inequality holds due to Theorem \ref{pc_estimator}.
\end{proof}
As suggested by Lemma \ref{inequality_lemma}, we need to further upper bound the right hand size of Eq.~(\ref{first_inequality}). Before doing that, we introduce the \emph{uniform deviation bound}, which is useful to derive estimation error bounds. The proof can be found in some textbooks such as \cite{mohri2012foundations} (Theorem 3.1).
\begin{lemma}
\label{uniform_bound}
Let $Z$ be a random variable drawn from a probability distribution with density $\mu$, $\mathcal{H}=\{h:\mathcal{Z}\mapsto [0,M]\}$ ($M>0$) be a class of measurable functions, $\{z_i\}_{i=1}^n$ be i.i.d. examples drawn from the distribution with density $\mu$. Then, for any $delta>0$, with probability at least $1-\delta$,
\begin{gather}
\nonumber
\sup_{h\in\mathcal{H}}\left|\mathbb{E}_{Z\sim\mu}\big[h(Z)\big] - \frac{1}{n}\sum_{i=1}^nh(z_i)\right|\leq 2\mathfrak{R}_n(\mathcal{H}) + M\sqrt{\frac{\log\frac{2}{\delta}}{2n}},
\end{gather}
where $\mathfrak{R}_n(\mathcal{H})$ denotes the (expected) \emph{Rademacher complexity} \cite{bartlett2002rademacher} of $\mathcal{H}$ with sample size $n$ over $\mu$.
\end{lemma}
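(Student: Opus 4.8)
The plan is to establish this two-sided uniform deviation bound by the standard route of \emph{concentration plus symmetrization}: I would control each direction of the absolute value separately using McDiarmid's bounded-differences inequality, replace the resulting expected supremum by the Rademacher complexity through a ghost-sample argument, and finally combine the two directions by a union bound. Since this is exactly Theorem~3.1 in \citet{mohri2012foundations}, one may of course simply invoke the textbook statement; the sketch below records the reasoning I would follow.

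First I would fix the one-sided functional
\[
\Phi(z_1,\dots,z_n) = \sup_{h\in\mathcal{H}}\Big(\mathbb{E}_{Z\sim\mu}\big[h(Z)\big] - \frac{1}{n}\sum_{i=1}^n h(z_i)\Big).
\]
Because every $h\in\mathcal{H}$ takes values in $[0,M]$, replacing a single coordinate $z_i$ by any $z_i'$ alters the empirical mean inside the supremum by at most $M/n$, so $\Phi$ enjoys the bounded-differences property with constant $M/n$ in each coordinate. McDiarmid's inequality then gives, with probability at least $1-\delta/2$,
\[
\Phi(z_1,\dots,z_n) \leq \mathbb{E}_S\big[\Phi\big] + M\sqrt{\frac{\log(2/\delta)}{2n}}.
\]

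The key step is to bound $\mathbb{E}_S[\Phi]$ by $2\mathfrak{R}_n(\mathcal{H})$. Here I would introduce an independent ghost sample $S'=\{z_i'\}_{i=1}^n$ drawn from $\mu$, write $\mathbb{E}_{Z}[h(Z)] = \mathbb{E}_{S'}[\frac{1}{n}\sum_i h(z_i')]$, and use Jensen's inequality to move $\mathbb{E}_{S'}$ outside the supremum, obtaining $\mathbb{E}_S[\Phi]\leq \mathbb{E}_{S,S'}[\sup_h \frac{1}{n}\sum_i (h(z_i')-h(z_i))]$. Since $z_i$ and $z_i'$ are i.i.d., each summand is symmetric, so multiplying by Rademacher signs $\sigma_i\in\{-1,+1\}$ leaves the joint distribution unchanged; splitting the supremum across the two terms and taking expectations over $\sigma$ then yields $\mathbb{E}_S[\Phi]\leq 2\mathfrak{R}_n(\mathcal{H})$.

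Finally I would run the identical argument on the reversed functional $\sup_{h}(\frac{1}{n}\sum_i h(z_i) - \mathbb{E}_Z[h(Z)])$, which again holds with probability at least $1-\delta/2$, and take a union bound over the two events to control $\sup_h|\cdot|$ with total probability at least $1-\delta$; this union bound is precisely what turns $\log(1/\delta)$ into $\log(2/\delta)$. I expect the symmetrization step to be the only delicate part — one must ensure that the ghost-sample substitution and the insertion of the Rademacher variables preserve the relevant distributions so that the supremum is validly bounded — whereas the bounded-differences verification and the union bound are routine.
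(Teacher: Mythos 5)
Your proposal is correct, and it matches the paper's treatment: the paper does not prove this lemma at all but simply invokes Theorem~3.1 of \citet{mohri2012foundations}, and your McDiarmid-plus-symmetrization argument (with the union bound over the two one-sided events accounting for the $\log\frac{2}{\delta}$ factor) is exactly the standard proof of that cited result.
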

\begin{lemma}
\label{lemma_of_first_inequality}
Suppose the loss function $\ell$ is $\rho$-Lipschitz with respect to the first argument ($0<\rho<\infty$), and all the functions in the model class $\mathcal{F}$ are bounded, i.e., there exists a constant $C_{\mathrm{b}}$ such that $\left\|f\right\|_{\infty}\leq C_{\mathrm{b}}$ for any $f\in\mathcal{F}$. Let $C_{\ell}:=\sup_{t=\pm 1}\ell(C_{\mathrm{b}},t)$. For any $\delta>0$, with probability $1-\delta$,
\begin{gather}
\nonumber
\sup_{f\in\mathcal{F}}\left|R_{\mathrm{PC}}^+(f) - \widehat{R}^+_{\mathrm{PC}}(f)\right|\leq (1+\pi_+)2\rho\widetilde{\mathfrak{R}}_n^+(\mathcal{F}) + (1+\pi_+)C_{\ell}\sqrt{\frac{\log\frac{4}{\delta}}{2n}}.
\end{gather}
\end{lemma}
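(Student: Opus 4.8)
The plan is to control the one-sided uniform deviation by separating the two loss terms inside $R_{\mathrm{PC}}^+$ and bounding each with the uniform deviation bound of Lemma~\ref{uniform_bound} together with Talagrand's contraction principle. First I would write
\[
R_{\mathrm{PC}}^+(f) = \mathbb{E}_{\widetilde{p}_+(\boldsymbol{x})}\big[\ell(f(\boldsymbol{x}),+1)\big] - \pi_+\,\mathbb{E}_{\widetilde{p}_+(\boldsymbol{x})}\big[\ell(f(\boldsymbol{x}),-1)\big],
\]
and analogously for $\widehat{R}_{\mathrm{PC}}^+(f)$, so that by the triangle inequality and subadditivity of the supremum,
\[
\sup_{f\in\mathcal{F}}\big|R_{\mathrm{PC}}^+(f) - \widehat{R}_{\mathrm{PC}}^+(f)\big| \leq A_+ + \pi_+ A_-,
\]
where $A_t := \sup_{f\in\mathcal{F}}\big|\mathbb{E}_{\widetilde{p}_+}[\ell(f,t)] - \frac{1}{n}\sum_{i=1}^n \ell(f(\boldsymbol{x}_i),t)\big|$ for $t=\pm 1$. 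The key observation is that both $A_+$ and $A_-$ are uniform deviations of a loss-composed function class over the \emph{single} distribution $\widetilde{p}_+$, so Lemma~\ref{uniform_bound} applies directly to each.

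Next I would bound $A_+$ and $A_-$ separately. Since $\|f\|_\infty\leq C_{\mathrm{b}}$ and $\ell$ is nonnegative, the functions $\boldsymbol{x}\mapsto\ell(f(\boldsymbol{x}),\pm 1)$ take values in $[0,C_\ell]$, so Lemma~\ref{uniform_bound} with $M=C_\ell$ gives, for each $t$, a bound of the form $2\mathfrak{R}_n(\ell_t\circ\mathcal{F}) + C_\ell\sqrt{\log(2/\delta')/(2n)}$, where $\ell_t(\cdot):=\ell(\cdot,t)$ and $\delta'$ is the per-term confidence. Applying this to both $A_+$ and $A_-$ with $\delta'=\delta/2$ and taking a union bound makes the two inequalities hold simultaneously with probability at least $1-\delta$, while turning $\log(2/\delta')$ into $\log(4/\delta)$. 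Because $\ell_{+1}$ and $\ell_{-1}$ are each $\rho$-Lipschitz in their first argument, Talagrand's contraction lemma yields $\mathfrak{R}_n(\ell_t\circ\mathcal{F})\leq\rho\,\widetilde{\mathfrak{R}}_n^+(\mathcal{F})$, so each of $A_+$ and $A_-$ is at most $2\rho\widetilde{\mathfrak{R}}_n^+(\mathcal{F}) + C_\ell\sqrt{\log(4/\delta)/(2n)}$.

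Substituting these into $A_+ + \pi_+ A_-$ immediately produces the factor $(1+\pi_+)$ in front of both the Rademacher term and the concentration term, which is exactly the claimed inequality. The routine parts are the algebraic decomposition and the final summation. The step needing the most care is the contraction: one must verify that the $\rho$-Lipschitz assumption gives a contraction factor of exactly $\rho$ (not $2\rho$) under the definition of Rademacher complexity used in Lemma~\ref{uniform_bound}, since any extra constant would propagate into the final bound. A secondary bookkeeping point is the confidence budget—because Lemma~\ref{uniform_bound} is invoked twice, the $\delta/2$ split is precisely what produces the $\log(4/\delta)$ term while keeping the total failure probability at most $\delta$.
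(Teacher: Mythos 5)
Your proposal is correct and follows essentially the same route as the paper's proof: the same triangle-inequality decomposition into the two loss terms over $\widetilde{p}_+$, Lemma~\ref{uniform_bound} applied to each with a $\delta/2$ split (yielding the $\log(4/\delta)$ factor), and Talagrand's contraction lemma to replace $\widetilde{\mathfrak{R}}_n^+(\ell\circ\mathcal{F})$ by $\rho\widetilde{\mathfrak{R}}_n^+(\mathcal{F})$. If anything, your explicit union-bound bookkeeping is slightly more careful than the paper's, which passes from $\log(2/\delta)$ to $\log(4/\delta)$ without comment.
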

\begin{proof}
By the definition of $R_{\mathrm{PC}}^+(f)$ and $\widehat{R}^+_{\mathrm{PC}}(f)$, we can obtain
\begin{align}
\nonumber
\sup_{f\in\mathcal{F}}\left|R_{\mathrm{PC}}^+(f) - \widehat{R}^+_{\mathrm{PC}}(f)\right| &\leq \sup_{f\in\mathcal{F}}\left|\mathbb{E}_{\widetilde{p}_+(\boldsymbol{x})}\big[\ell(f(\boldsymbol{x}),+1)\big] - \frac{1}{n}\sum_{i=1}^n\ell(f(\boldsymbol{x}),+1)
\right|\\
\label{positive_eq1}
&\quad\quad\quad\quad+\pi_+\sup_{f\in\mathcal{F}}\left|\mathbb{E}_{\widetilde{p}_+(\boldsymbol{x})}\big[\ell(f(\boldsymbol{x}),-1)\big] - \frac{1}{n}\sum_{i=1}^n\ell(f(\boldsymbol{x}),-1)
\right|.
\end{align}
By applying Lemma \ref{uniform_bound}, we have for any $\delta>0$, with probability $1-\delta$,
\begin{gather}
\label{positive_eq2}
\sup_{f\in\mathcal{F}}\left|\mathbb{E}_{\widetilde{p}_+(\boldsymbol{x})}\big[\ell(f(\boldsymbol{x}),+1)\big] - \frac{1}{n}\sum_{i=1}^n\ell(f(\boldsymbol{x}),+1)
\right|\leq 2\widetilde{\mathfrak{R}}_n^+(\ell\circ\mathcal{F})+C_{\ell}\sqrt{\frac{\log\frac{2}{\delta}}{2n}},
\end{gather}
and for any for any $\delta>0$, with probability $1-\delta$,
\begin{gather}
\label{positive_eq3}
\sup_{f\in\mathcal{F}}\left|\mathbb{E}_{\widetilde{p}_+(\boldsymbol{x})}\big[\ell(f(\boldsymbol{x}),-1)\big] - \frac{1}{n}\sum_{i=1}^n\ell(f(\boldsymbol{x}),-1)
\right|\leq 2\widetilde{\mathfrak{R}}_n^+(\ell\circ\mathcal{F})+C_{\ell}\sqrt{\frac{\log\frac{2}{\delta}}{2n}},
\end{gather}
where $\ell\circ\mathcal{F}$ means $\{\ell\circ f\mid f\in\mathcal{F}\}$. By Talagrand's lemma (Lemma 4.2 in \cite{mohri2012foundations}),
\begin{gather}
\label{talagrand}
\widetilde{\mathfrak{R}}_n^+(\ell\circ\mathcal{F})\leq \rho\widetilde{\mathfrak{R}}_n^+(\mathcal{F}).
\end{gather}
Finally, by combing Eqs. (\ref{positive_eq1}), (\ref{positive_eq2}), (\ref{positive_eq3}), and (\ref{talagrand}), we have for any $\delta>0$, with probability at least $1-\delta$,
\begin{gather}
\sup_{f\in\mathcal{F}}\left|R_{\mathrm{PC}}^+(f) - \widehat{R}^+_{\mathrm{PC}}(f)\right|\leq (1+\pi_+)2\rho\widetilde{\mathfrak{R}}_n^+(\mathcal{F}) + (1+\pi_+)C_{\ell}\sqrt{\frac{\log\frac{4}{\delta}}{2n}},
\end{gather}
which concludes the proof of Lemma \ref{lemma_of_first_inequality}.
\end{proof}
\begin{lemma}
\label{lemma_of_second_inequality}
Suppose the loss function $\ell$ is $\rho$-Lipschitz with respect to the first argument ($0<\rho<\infty$), and all the functions in the model class $\mathcal{F}$ are bounded, i.e., there exists a constant $C_{\mathrm{b}}$ such that $\left\|f\right\|_{\infty}\leq C_{\mathrm{b}}$ for any $f\in\mathcal{F}$. Let $C_{\ell}:=\sup_{t=\pm 1}\ell(C_{\mathrm{b}},t)$. For any $\delta>0$, with probability $1-\delta$,
\begin{gather}
\nonumber
\sup_{f\in\mathcal{F}}\left|R_{\mathrm{PC}}^-(f) - \widehat{R}^-_{\mathrm{PC}}(f)\right|\leq (1+\pi_-)2\rho\widetilde{\mathfrak{R}}_n^-(\mathcal{F}) + (1+\pi_-)C_{\ell}\sqrt{\frac{\log\frac{4}{\delta}}{2n}}.
\end{gather}
\end{lemma}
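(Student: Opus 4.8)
The plan is to mirror the proof of Lemma \ref{lemma_of_first_inequality} almost verbatim, exploiting the fact that $R_{\mathrm{PC}}^-(f)$ and $\widehat{R}_{\mathrm{PC}}^-(f)$ are the exact $\widetilde{p}_-$-analogues of $R_{\mathrm{PC}}^+(f)$ and $\widehat{R}_{\mathrm{PC}}^+(f)$, obtained by interchanging the roles of the two labels and swapping $\pi_+$ for $\pi_-$. First I would invoke the triangle inequality to split the supremum of the deviation into two pieces, one controlling the $\ell(f(\boldsymbol{x}^\prime),-1)$ term (with coefficient $1$) and one controlling the $\pi_-\ell(f(\boldsymbol{x}^\prime),+1)$ term (with coefficient $\pi_-$):
\begin{align}
\nonumber
\sup_{f\in\mathcal{F}}\left|R_{\mathrm{PC}}^-(f) - \widehat{R}^-_{\mathrm{PC}}(f)\right| &\leq \sup_{f\in\mathcal{F}}\left|\mathbb{E}_{\widetilde{p}_-(\boldsymbol{x}^\prime)}\big[\ell(f(\boldsymbol{x}^\prime),-1)\big] - \frac{1}{n}\sum_{i=1}^n\ell(f(\boldsymbol{x}_i^\prime),-1)\right|\\
\nonumber
&\quad +\pi_-\sup_{f\in\mathcal{F}}\left|\mathbb{E}_{\widetilde{p}_-(\boldsymbol{x}^\prime)}\big[\ell(f(\boldsymbol{x}^\prime),+1)\big] - \frac{1}{n}\sum_{i=1}^n\ell(f(\boldsymbol{x}_i^\prime),+1)\right|.
\end{align}

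Then I would bound each of the two suprema separately by the uniform deviation bound of Lemma \ref{uniform_bound}, applied with the random variable $\boldsymbol{x}^\prime$, density $\mu=\widetilde{p}_-$, and the composed function class $\ell\circ\mathcal{F}$, whose range lies in $[0,C_{\ell}]$ by the boundedness of $\mathcal{F}$ together with the definition of $C_{\ell}$. To make both deviation bounds hold simultaneously I would apply each at confidence level $1-\delta/2$ and take a union bound; since $\log\frac{2}{\delta/2}=\log\frac{4}{\delta}$, each supremum is then bounded by $2\widetilde{\mathfrak{R}}_n^-(\ell\circ\mathcal{F})+C_{\ell}\sqrt{\frac{\log\frac{4}{\delta}}{2n}}$ with probability at least $1-\delta$. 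Next I would replace the Rademacher complexity of the composed class by that of $\mathcal{F}$ itself via Talagrand's contraction lemma, using the $\rho$-Lipschitzness of $\ell$, to obtain $\widetilde{\mathfrak{R}}_n^-(\ell\circ\mathcal{F})\leq \rho\widetilde{\mathfrak{R}}_n^-(\mathcal{F})$.

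Substituting this into the two pieces above and collecting the common factor pulls the coefficient $1+\pi_-$ out front, giving exactly the claimed bound. The argument is entirely symmetric to Lemma \ref{lemma_of_first_inequality}, so I anticipate no genuine obstacle; the only points requiring care are the union-bound bookkeeping that produces $\log\frac{4}{\delta}$ rather than $\log\frac{2}{\delta}$, and the observation that the single constant $C_{\ell}=\sup_{t=\pm1}\ell(C_{\mathrm{b}},t)$ uniformly bounds both the $\ell(\cdot,-1)$ and the $\ell(\cdot,+1)$ terms, which it does by construction since the supremum is taken over $t=\pm1$.
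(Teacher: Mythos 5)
Your proposal is correct and matches the paper's approach: the paper proves this lemma by simply noting it is analogous to Lemma \ref{lemma_of_first_inequality}, and your argument reproduces exactly that symmetric proof (triangle-inequality split, uniform deviation bound via Lemma \ref{uniform_bound}, and Talagrand's contraction lemma). If anything, your explicit union-bound bookkeeping with $\delta/2$ per application, yielding $\log\frac{4}{\delta}$, is slightly more careful than the paper's own write-up.
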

\begin{proof}
Lemma \ref{lemma_of_second_inequality} can be proved similarly to Lemma \ref{lemma_of_first_inequality}.
\end{proof}
By combining Lemma \ref{inequality_lemma}, Lemma \ref{lemma_of_first_inequality}, and Lemma \ref{lemma_of_second_inequality}, Theorem \ref{estimation_error} is proved.\qed
\section{Proof of Theorem \ref{noise_rates}}\label{proof_of_theorem5}
Suppose there are $n$ pairs of paired data points, which means there are in total $2n$ data points. For our Pcomp classification problem, we could simply regard $\boldsymbol{x}$ sampled from $\widetilde{p}_+(\boldsymbol{x})$ as (noisy) positive data and $\boldsymbol{x}^\prime$ sampled from $\widetilde{p}_-(\boldsymbol{x}^\prime)$ as (noisy) negative data. 
Given $n$ pairs of examples $\{(\boldsymbol{x}_i,\boldsymbol{x}_i^\prime)\}_{i=1}^n$, for the $n$ observed positive examples, there are actually $n\cdot p(y=+1| \widetilde{y}=+1)$ true positive examples; for the $n$ observed negative examples, there are actually $n\cdot p(y=-1| \widetilde{y}=-1)$ true negative examples. From our defined data generation process in Theorem \ref{data_generation}, it is intuitive to obtain
\begin{align}
\nonumber
p(y=+1\mid\widetilde{y}=+1) &= \frac{\pi_+^2+\pi_+\pi_-}{\pi_+^2+\pi_-^2+\pi_+\pi_-} = \frac{\pi_+}{\pi_+^2+\pi_-^2+\pi_+\pi_-},\\
\nonumber
p(y=-1\mid\widetilde{y}=-1) &= \frac{\pi_-^2+\pi_+\pi_-}{\pi_+^2+\pi_-^2+\pi_+\pi_-} = \frac{\pi_-}{\pi_+^2+\pi_-^2+\pi_+\pi_-}.
\end{align}
Since $\phi_{+} = p(y=-1\mid\widetilde{y}=+1)=1-p(y=+1\mid \widetilde{y}=+1)$ and $\phi_{-} = p(y=+1\mid \widetilde{y}=-1)=1-p(y=-1\mid\widetilde{y}=-1)$, we can obtain
\begin{align}
\nonumber
\phi_{+} = p(y=-1\mid\widetilde{y}=+1) &= 1 - \frac{\pi_+}{\pi_+^2+\pi_-^2+\pi_+\pi_-} = \frac{\pi_-^2}{\pi_+^2+\pi_-^2+\pi_+\pi_-},\\
\nonumber
\phi_{-} = p(y=+1\mid\widetilde{y}=-1) &= 1 - \frac{\pi_-}{\pi_+^2+\pi_-^2+\pi_+\pi_-}
= \frac{\pi_+^2}{\pi_+^2+\pi_-^2+\pi_+\pi_-}.
\end{align}
In this way, we can further obtain the following noise transition ratios:
\begin{align}
\nonumber
\rho_{+} &= p(\widetilde{y}=-1\mid y=+1) = \frac{p(y=+1\mid\widetilde{y}=-1)p(\widetilde{y}=-1)}{p(y=+1\mid\widetilde{y}=-1)p(\widetilde{y}=-1)+p(y=+1\mid\widetilde{y}=+1)p(\widetilde{y}=+1)}=\frac{\pi_+}{1+\pi_+}
,\\
\nonumber
\rho_{-} &= p(\widetilde{y}=+1\mid y=-1) = \frac{p(y=-1\mid\widetilde{y}=+1)p(\widetilde{y}=+1)}{p(y=-1\mid\widetilde{y}=+1)p(\widetilde{y}=+1)+p(y=-1\mid\widetilde{y}=-1)p(\widetilde{y}=-1)}= \frac{\pi_-}{1+\pi_-},
\end{align}
where $p(\widetilde{y}=1) = p(\widetilde{y}=-1) = \frac{1}{2}$, because we have the same number of observed positive examples and negative examples.
\section{Proof of Theorem \ref{second_estimation_error}}\label{proof_of_theorem7}
First of all, we introduce the following notations:
\begin{align}
\nonumber
R_{\mathrm{pPC}}^+(f) &= \mathbb{E}_{\widetilde{p}_+(\boldsymbol{x})}\big[\ell(f(\boldsymbol{x}),+1)\mathbb{I}[\boldsymbol{x}\in\mathrm{P}\widetilde{\mathrm{P}}]\big],\\
\nonumber
\widehat{R}_{\mathrm{pPC}}^+(f) &= 
\frac{1}{n}\sum_{i=1}^n\big(\ell(f(\boldsymbol{x}_i),+1)\mathbb{I}[\boldsymbol{x}_i\in\mathrm{P}\widetilde{\mathrm{P}}]\big),\\
\nonumber
R_{\mathrm{pPC}}^-(f) &=
\mathbb{E}_{\widetilde{p}_-(\boldsymbol{x}^\prime)}\big[\ell(f(\boldsymbol{x}^\prime),-1)\mathbb{I}[\boldsymbol{x}^\prime\in\mathrm{N}\widetilde{\mathrm{N}}]\big],\\
\nonumber
\widehat{R}_{\mathrm{pPC}}^-(f)&=
\frac{1}{n}\sum_{i=1}^n\big(\ell(f(\boldsymbol{x}_i^\prime),-1)\mathbb{I}[\boldsymbol{x}_i^\prime\in\mathrm{N}\widetilde{\mathrm{N}}]\big).
\end{align}
In this way, we could simply represent $R_{\mathrm{ppc}}(f)$ and $\widehat{R}_{\mathrm{pPC}}(f)$ as
\begin{gather}
\nonumber
R_{\mathrm{pPC}}(f) = \frac{1}{1-\rho_+}R^+_{\mathrm{pPC}}(f) + \frac{1}{1-\rho_-}R^-_{\mathrm{pPC}}(f),\quad
\widehat{R}_{\mathrm{pPC}}(f) = \frac{1}{1-\rho_+}\widehat{R}^+_{\mathrm{pPC}}(f) + \frac{1}{1-\rho_-}\widehat{R}^-_{\mathrm{pPC}}(f).
\end{gather}
Then we have the following lemma.
\begin{lemma}
\label{second_inequality_lemma}
The following inequality holds:
\begin{gather}
\label{second_inequality}
R(\widehat{f}_{\mathrm{pPC}}) - R(f^\star)\leq \frac{2}{1-\rho_+}\sup_{f\in\mathcal{F}}\left|R_{\mathrm{pPC}}^+(f) - \widehat{R}^+_{\mathrm{pPC}}(f)\right| + \frac{2}{1-\rho_-}\sup_{f\in\mathcal{F}}\left|R_{\mathrm{pPC}}^-(f) - \widehat{R}_{\mathrm{pPC}}^-(f)\right|.
\end{gather}
\end{lemma}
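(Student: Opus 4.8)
The plan is to mirror the proof of Lemma \ref{inequality_lemma}, replacing the unbiased estimator $R_{\mathrm{PC}}$ by the progressive estimator $R_{\mathrm{pPC}}$. First I would insert $\widehat{R}_{\mathrm{pPC}}$ into the difference and telescope:
\[
R(\widehat{f}_{\mathrm{pPC}}) - R(f^\star) = \big(R(\widehat{f}_{\mathrm{pPC}}) - \widehat{R}_{\mathrm{pPC}}(\widehat{f}_{\mathrm{pPC}})\big) + \big(\widehat{R}_{\mathrm{pPC}}(\widehat{f}_{\mathrm{pPC}}) - \widehat{R}_{\mathrm{pPC}}(f^\star)\big) + \big(\widehat{R}_{\mathrm{pPC}}(f^\star) - R(f^\star)\big).
\]
The middle bracket is nonpositive by the optimality of $\widehat{f}_{\mathrm{pPC}} = \argmin_{f\in\mathcal{F}}\widehat{R}_{\mathrm{pPC}}(f)$.

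Next I would invoke Theorem \ref{noise_estimator}: under the separability condition one has $R(f) = R_{\mathrm{pPC}}(f)$, so $R(\widehat{f}_{\mathrm{pPC}})$ and $R(f^\star)$ may be replaced by $R_{\mathrm{pPC}}(\widehat{f}_{\mathrm{pPC}})$ and $R_{\mathrm{pPC}}(f^\star)$. Bounding each of the two remaining brackets by the uniform deviation $\sup_{f\in\mathcal{F}}|R_{\mathrm{pPC}}(f) - \widehat{R}_{\mathrm{pPC}}(f)|$ then yields
\[
R(\widehat{f}_{\mathrm{pPC}}) - R(f^\star) \leq 2\sup_{f\in\mathcal{F}}\big|R_{\mathrm{pPC}}(f) - \widehat{R}_{\mathrm{pPC}}(f)\big|.
\]
Finally I would substitute the decompositions $R_{\mathrm{pPC}}(f) = \frac{1}{1-\rho_+}R^+_{\mathrm{pPC}}(f) + \frac{1}{1-\rho_-}R^-_{\mathrm{pPC}}(f)$ and the analogous one for $\widehat{R}_{\mathrm{pPC}}$, apply the triangle inequality, and use subadditivity of the supremum to split the single supremum into the two claimed terms carrying the weights $\frac{2}{1-\rho_+}$ and $\frac{2}{1-\rho_-}$.

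The delicate point is the identity $R(f) = R_{\mathrm{pPC}}(f)$ from Theorem \ref{noise_estimator}, which holds only under the separability condition; in particular the selection indicators $\mathbb{I}[\boldsymbol{x}\in\widetilde{\mathcal{P}}_{\mathrm{sel}}]$ are \emph{a priori} $f$-dependent, but under separability the selected sets coincide with the true-label memberships, so these indicators become fixed and the subsequent uniform-deviation machinery (Lemma \ref{uniform_bound} together with Talagrand's contraction) can be applied to the two suprema. The remaining steps are routine algebra.
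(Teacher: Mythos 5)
Your proposal is correct and follows essentially the same route as the paper: the paper omits the proof of this lemma, stating only that it is ``quite similar to that of Lemma~\ref{inequality_lemma},'' and your argument (telescoping through $\widehat{R}_{\mathrm{pPC}}$, dropping the nonpositive middle term by optimality of $\widehat{f}_{\mathrm{pPC}}$, replacing $R$ by $R_{\mathrm{pPC}}$ via Theorem~\ref{noise_estimator}, and splitting the supremum using the weighted decomposition into $R^{\pm}_{\mathrm{pPC}}$) is exactly that adaptation. Your added remark about the $f$-dependence of the selection indicators and the role of the separability condition is a valid caveat that the paper glosses over, but it does not change the structure of the argument.
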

\begin{proof}
We omit the proof of Lemma \ref{second_inequality_lemma} since it is quite similar to that of Lemma \ref{inequality_lemma}.
\end{proof}
As suggested by Lemma \ref{second_inequality_lemma}, we need to further upper bound the right hand size of Eq.~(\ref{second_inequality}). According to Lemma \ref{uniform_bound}, we have the following two lemmas.
\begin{lemma}
\label{lemma_of_third_inequality}
Suppose the loss function $\ell$ is $\rho$-Lipschitz with respect to the first argument ($0<\rho<\infty$), and all the functions in the model class $\mathcal{F}$ are bounded, i.e., there exists a constant $C_{\mathrm{b}}$ such that $\left\|f\right\|_{\infty}\leq C_{\mathrm{b}}$ for any $f\in\mathcal{F}$. Let $C_{\ell}:=\sup_{z\leq C_{\mathrm{b}},t=\pm 1}\ell(z,t)$. For any $\delta>0$, with probability $1-\delta$,
\begin{gather}
\nonumber
\sup_{f\in\mathcal{F}}\left|R_{\mathrm{pPC}}^+(f) - \widehat{R}^+_{\mathrm{pPC}}(f)\right|\leq 2\rho\widetilde{\mathfrak{R}}_n^+(\mathcal{F}) + C_{\ell}\sqrt{\frac{\log\frac{2}{\delta}}{2n}}.
\end{gather}
\end{lemma}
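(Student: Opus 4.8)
The plan is to treat $\sup_{f\in\mathcal{F}}|R_{\mathrm{pPC}}^+(f) - \widehat{R}^+_{\mathrm{pPC}}(f)|$ as a uniform deviation over the function class $\mathcal{H}=\{\boldsymbol{x}\mapsto \ell(f(\boldsymbol{x}),+1)\mathbb{I}[\boldsymbol{x}\in\mathrm{P}\widetilde{\mathrm{P}}] : f\in\mathcal{F}\}$, apply the uniform deviation bound of Lemma \ref{uniform_bound}, and then strip off the loss by a contraction argument as in Lemma \ref{lemma_of_first_inequality}. The crucial preliminary observation is that, under the separability condition, the selected set $\widetilde{\mathcal{P}}_{\mathrm{sel}}$ coincides exactly with the set of true positive instances inside $\widetilde{\mathcal{P}}$: the true positives make up precisely a fraction $1-\phi_+$ of the observed positives and all receive strictly larger scores than the true negatives, so ranking by $f$ and retaining the top $(1-\phi_+)$-fraction recovers exactly the true positives. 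Consequently the indicator $\mathbb{I}[\boldsymbol{x}\in\mathrm{P}\widetilde{\mathrm{P}}]$ is governed by the latent true label of $\boldsymbol{x}$ alone and does \emph{not} depend on $f$, and each $h_f\in\mathcal{H}$ takes values in $[0,C_\ell]$.

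First I would apply Lemma \ref{uniform_bound} to $\mathcal{H}$ with $M=C_\ell$, which gives, with probability at least $1-\delta$,
\begin{gather}
\nonumber
\sup_{f\in\mathcal{F}}\left|R_{\mathrm{pPC}}^+(f) - \widehat{R}^+_{\mathrm{pPC}}(f)\right|\leq 2\mathfrak{R}_n(\mathcal{H}) + C_\ell\sqrt{\frac{\log\frac{2}{\delta}}{2n}}.
\end{gather}
It then remains to bound $\mathfrak{R}_n(\mathcal{H})$ by $\rho\,\widetilde{\mathfrak{R}}_n^+(\mathcal{F})$. Writing $c(\boldsymbol{x})=\mathbb{I}[\boldsymbol{x}\in\mathrm{P}\widetilde{\mathrm{P}}]\in\{0,1\}$, each summand appearing in $\mathfrak{R}_n(\mathcal{H})$ has the form $\sigma_i\,c(\boldsymbol{x}_i)\ell(f(\boldsymbol{x}_i),+1)$, i.e.\ the loss composed with $f$ multiplied by a fixed $\{0,1\}$ weight that is independent of $f$.

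The main obstacle is this contraction step, since the per-index weight $c(\boldsymbol{x}_i)$ blocks a direct appeal to the single-function Talagrand lemma used in Lemma \ref{lemma_of_first_inequality}. I would resolve it by observing that for each $i$ the map $z\mapsto c(\boldsymbol{x}_i)\ell(z,+1)$ is $\rho$-Lipschitz: it is either $\ell(\cdot,+1)$ itself, which is $\rho$-Lipschitz, or the constant $0$. The $f$-independence of the indicators established above is exactly what makes these per-coordinate maps well defined and the weights i.i.d.\ across $i$, so the per-coordinate (Ledoux--Talagrand) contraction inequality applies and yields $\mathfrak{R}_n(\mathcal{H})\leq \rho\,\widetilde{\mathfrak{R}}_n^+(\mathcal{F})$. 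Combining this with the displayed uniform deviation bound gives
\begin{gather}
\nonumber
\sup_{f\in\mathcal{F}}\left|R_{\mathrm{pPC}}^+(f) - \widehat{R}^+_{\mathrm{pPC}}(f)\right|\leq 2\rho\,\widetilde{\mathfrak{R}}_n^+(\mathcal{F}) + C_\ell\sqrt{\frac{\log\frac{2}{\delta}}{2n}},
\end{gather}
which is the claimed bound.
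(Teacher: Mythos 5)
Your main line of attack is exactly the one the paper intends but never writes down (its ``proof'' of this lemma is a one-line deferral to Lemma \ref{lemma_of_first_inequality}): apply Lemma \ref{uniform_bound} with $M=C_\ell$ to the weighted class $\mathcal{H}$, then strip the loss by per-coordinate Ledoux--Talagrand contraction, using the fact that $z\mapsto c\,\ell(z,+1)$ is $\rho$-Lipschitz for any fixed weight $c\in\{0,1\}$, so that $\mathfrak{R}_n(\mathcal{H})\leq\rho\,\widetilde{\mathfrak{R}}_n^+(\mathcal{F})$. That part is sound, and you are right that the indicator is the crux: the whole argument only works if $\mathbb{I}[\boldsymbol{x}\in\mathrm{P}\widetilde{\mathrm{P}}]$ is a fixed, $f$-independent, per-point quantity, a point the paper silently glosses over.

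The gap is in the preliminary observation you use to secure that $f$-independence. You claim that under separability the ranked selection $\widetilde{\mathcal{P}}_{\mathrm{sel}}$ coincides \emph{exactly} with the set of true positives inside $\widetilde{\mathcal{P}}$, because ``the true positives make up precisely a fraction $1-\phi_+$ of the observed positives.'' That fraction statement holds only in expectation (equivalently at the population level): in a sample of size $n$ the number of true positives in $\widetilde{\mathcal{P}}$ is $\mathrm{Bin}(n,1-\phi_+)$-distributed, while $|\widetilde{\mathcal{P}}_{\mathrm{sel}}|=(1-\phi_+)n$ is deterministic, so the two sets typically differ by $\Theta(\sqrt{n})$ points; separability guarantees the ranking is correct, not that the cutoff lands exactly at the positive/negative boundary. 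Under your selection-based reading of the indicator this has two consequences: the empirical risk acquires an extra error of order $C_\ell\sqrt{\log(1/\delta)/n}$ (controllable by Hoeffding on the binomial count) that your stated bound omits, and, worse, the indicator then depends on the entire sample through the ranking, so the summands are not i.i.d.\ functions of $(\boldsymbol{x}_i,y_i)$ and Lemma \ref{uniform_bound} does not apply as stated. The clean repair is to read the Appendix~G definitions literally: $\mathrm{P}\widetilde{\mathrm{P}}$ is RankPruning's notation for the set of examples whose \emph{true} label is positive among the observed positives, so the indicator is by definition a fixed function of the latent label of $\boldsymbol{x}_i$ alone, independent of $f$ and of the other sample points; no recovery claim is needed, and your uniform-deviation-plus-contraction argument then goes through verbatim. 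Relating the ranked selection to $\mathrm{P}\widetilde{\mathrm{P}}$ is the business of Theorem \ref{noise_estimator} (where it is exact only at the population level), not of this lemma.
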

\begin{lemma}
\label{lemma_of_fourth_inequality}
Suppose the loss function $\ell$ is $\rho$-Lipschitz with respect to the first argument ($0<\rho<\infty$), and all the functions in the model class $\mathcal{F}$ are bounded, i.e., there exists a constant $C_{\mathrm{b}}$ such that $\left\|f\right\|_{\infty}\leq C_{\mathrm{b}}$ for any $f\in\mathcal{F}$. Let $C_{\ell}:=\sup_{z\leq C_{\mathrm{b}}, t=\pm 1}\ell(z,t)$. For any $\delta>0$, with probability $1-\delta$,
\begin{gather}
\nonumber
\sup_{f\in\mathcal{F}}\left|R_{\mathrm{pPC}}^-(f) - \widehat{R}^-_{\mathrm{pPC}}(f)\right|\leq 2\rho\widetilde{\mathfrak{R}}_n^-(\mathcal{F}) + C_{\ell}\sqrt{\frac{\log\frac{2}{\delta}}{2n}}.
\end{gather}
\end{lemma}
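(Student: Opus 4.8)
The plan is to mirror the positive-side argument of Lemma~\ref{lemma_of_first_inequality} (equivalently the companion Lemma~\ref{lemma_of_third_inequality}), combining the uniform deviation bound of Lemma~\ref{uniform_bound} with a contraction (Talagrand) step. First I would introduce the composite function class $\mathcal{H}^- = \{h_f : f\in\mathcal{F}\}$, where $h_f(\boldsymbol{x}^\prime) = \ell(f(\boldsymbol{x}^\prime),-1)\mathbb{I}[\boldsymbol{x}^\prime\in\mathrm{N}\widetilde{\mathrm{N}}]$. Because $\|f\|_\infty\leq C_{\mathrm{b}}$ forces $\ell(f(\boldsymbol{x}^\prime),-1)\leq C_{\ell}$ and the indicator lies in $\{0,1\}$, every member of $\mathcal{H}^-$ takes values in $[0,C_{\ell}]$. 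Since $\{\boldsymbol{x}^\prime_i\}_{i=1}^n$ are i.i.d.\ draws from $\widetilde{p}_-(\boldsymbol{x}^\prime)$ and $R_{\mathrm{pPC}}^-(f)$, $\widehat{R}_{\mathrm{pPC}}^-(f)$ are precisely the population and empirical means of $h_f\in\mathcal{H}^-$, applying Lemma~\ref{uniform_bound} with $M=C_{\ell}$ gives, with probability at least $1-\delta$,
\[
\sup_{f\in\mathcal{F}}\left|R_{\mathrm{pPC}}^-(f) - \widehat{R}^-_{\mathrm{pPC}}(f)\right| \leq 2\mathfrak{R}_n(\mathcal{H}^-) + C_{\ell}\sqrt{\frac{\log\frac{2}{\delta}}{2n}}.
\]

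The core step is then to show $\mathfrak{R}_n(\mathcal{H}^-)\leq\rho\widetilde{\mathfrak{R}}_n^-(\mathcal{F})$, and this is where the indicator factor introduces a subtlety relative to the $R^+_{\mathrm{PC}}$ case in Appendix~E: for each sample the relevant map is $z\mapsto\mathbb{I}[\boldsymbol{x}^\prime_i\in\mathrm{N}\widetilde{\mathrm{N}}]\,\ell(z,-1)$, which differs from coordinate to coordinate. I would first note that under the separability condition the selected set $\mathrm{N}\widetilde{\mathrm{N}}$ coincides with the set of genuinely negative instances, so the indicator is a fixed $\{0,1\}$-valued function of $\boldsymbol{x}^\prime$ that does not depend on $f$; this $f$-independence is what permits a clean factorization through the loss class rather than through an $f$-coupled selection. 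For each fixed $\boldsymbol{x}_i^\prime$ the map $z\mapsto \mathbb{I}[\boldsymbol{x}^\prime_i\in\mathrm{N}\widetilde{\mathrm{N}}]\,\ell(z,-1)$ is $\rho$-Lipschitz: when the indicator equals $1$ it is $\ell(\cdot,-1)$, which is $\rho$-Lipschitz by hypothesis, and when the indicator equals $0$ it is the zero function, trivially $\rho$-Lipschitz. Since every per-coordinate Lipschitz constant is at most $\rho$, the coordinatewise form of Talagrand's contraction lemma (Lemma~4.2 in \cite{mohri2012foundations}) yields $\mathfrak{R}_n(\mathcal{H}^-)\leq\rho\widetilde{\mathfrak{R}}_n^-(\mathcal{F})$.

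Combining the two displays gives the claimed inequality, with the $+1$ replaced by $-1$ and no class-prior prefactor appearing (unlike the corrected estimator of Theorem~\ref{estimation_error}). I expect the only genuine obstacle to be justifying this contraction step in the presence of the indicator: one must argue both that the indicator is $f$-independent, so that the contraction applies to $\ell\circ\mathcal{F}$ rather than to a selection that moves with $f$, and that multiplying a $\rho$-Lipschitz function by a $\{0,1\}$ factor never increases its Lipschitz constant. Everything else is a routine transcription of the positive-side estimate.
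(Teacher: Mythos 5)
Your proposal is correct and follows essentially the same route as the paper, which omits this proof as ``similar to that of Lemma~\ref{lemma_of_first_inequality}'': the uniform deviation bound of Lemma~\ref{uniform_bound} applied to the class $\{\boldsymbol{x}^\prime\mapsto\ell(f(\boldsymbol{x}^\prime),-1)\mathbb{I}[\boldsymbol{x}^\prime\in\mathrm{N}\widetilde{\mathrm{N}}]\}$ with $M=C_{\ell}$, followed by the coordinatewise Talagrand contraction. Your explicit treatment of the indicator---noting that under separability it is $f$-independent and that multiplying by a $\{0,1\}$ factor preserves $\rho$-Lipschitzness per coordinate---is precisely the detail the paper glosses over, and it is handled correctly.
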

We omit the proofs of Lemma \ref{lemma_of_third_inequality} and Lemma \ref{lemma_of_fourth_inequality} since they are similar to that of Lemma \ref{lemma_of_first_inequality}.

By combing Lemma \ref{second_inequality_lemma}, Lemma \ref{lemma_of_third_inequality}, and Lemma \ref{lemma_of_fourth_inequality}, Theorem \ref{second_estimation_error} is proved.
\section{Supplementary Information of Experiments}\label{experiments_supp}
Table \ref{datasets} reports the specification of the used benchmark datasets and models.

\noindent\textbf{MNIST}\footnote{\url{http://yann.lecun.com/exdb/mnist/}} \cite{lecun1998gradient}.\quad This is a grayscale image dataset composed of handwritten digits from 0 to 9 where the size of the each image is $28\times 28$. It contains 60,000 training images and 10,000 test images. Because the original dataset has 10 classes, we regard the even digits as the positive class and the odd digits as the negative class.

\noindent\textbf{Fashion-MNIST}\footnote{\url{https://github.com/zalandoresearch/fashion-mnist}} \cite{xiao2017fashion}.\quad Similarly to MNIST, this is also a grayscale image dataset composed of fashion items (`T-shirt', `trouser', `pullover', `dress', `sandal', `coat', `shirt', `sneaker', `bag', and `ankle boot'). It contains 60,000 training examples and 10,000 test examples. It is converted into a binary classification dataset as follows:
\begin{itemize}
\item The positive class is formed by `T-shirt', `pullover', `coat', `shirt', and `bag'.
\item The negative class is formed by `trouser', `dress', `sandal', `sneaker', and `ankle boot'.
\end{itemize}
\noindent\textbf{Kuzushiji-MNIST}\footnote{\url{https://github.com/rois-codh/kmnist}} \cite{netzer2011reading}.\quad This is another grayscale image dataset that is similar to MNIST. It is a 10-class dataset of cursive Japanese (``Kuzushiji") characters. It consists of 60,000 training images and 10,000 test images. It is converted into a binary classification dataset as follows:
\begin{itemize}
\item The positive class is formed by `o', `su',`na', `ma', `re'.
\item The negative class is formed by `ki',`tsu',`ha', `ya',`wo'.
\end{itemize}
\noindent\textbf{CIFAR-10}\footnote{\url{https://www.cs.toronto.edu/~kriz/cifar.html}} \cite{krizhevsky2009learning}.\quad This is also a color image dataset of 10 different objects (`airplane', `bird', `automobile', `cat', `deer', `dog', `frog', `horse', `ship', and `truck'), where the size of each image is $32\times 32\times 3$. There are 5,000 training images and 1,000 test images per class. This dataset is converted into a binary classification dataset as follows:
\begin{itemize}
\item The positive class is formed by `bird', `deer', `dog', `frog', `cat', and `horse'. 
\item The negative class is formed by `airplane', `automobile', `ship', and `truck'. 
\end{itemize}
\noindent\textbf{USPS, Pendigits, Optdigits.}\quad These datasets are composed of handwritten digits from 0 to 9. Because each of the original datasets has 10 classes, we regard the even digits as the positive class and the odd digits as the negative class.

\noindent\textbf{CNAE-9.}\quad This dataset contains 1,080 documents of free text business descriptions of Brazilian companies categorized into a subset of 9 categories cataloged in a table called National Classification of Economic Activities.
\begin{itemize}
\item The positive class is formed by `2', `4', `6' and `8'. 
\item The negative class is formed by `1', `3', `5', `7' and `9'.
\end{itemize}
For MNIST, Kuzushiji-MNIST, and Fashion-MNIST, we set learning rate to $1e-3$ and weight decay to $1e-5$. For CIFAR-10, we set learning rate to $1e-3$ and weight decay to $1e-3$.
We also list the number of pointwise corrupted examples used for model training on each dataset: 30,000 for MNIST, Kuzushiji-MNIST, and Fashion-MNIST; 20,000 for CIFAR-10;
4,000 for USPS; 5,000 for Pendigits; 2,000 for Optdigits; 400 for CNAE-9.

\end{document}